\newcommand{\approach}[1]{\emph{#1}}
\newcommand{\alg}{{\tt WaterfallUCB1}}
\newtheorem{theorem}{Theorem}
\title{Waterfall Bandits: Learning to Sell Ads Online}
\author{Branislav Kveton \thanks{Authors are listed in alphabetical order.} \\ Google Research \and Saied Mahdian\\ Stanford University \and S. Muthukrishnan\\ Rutgers University \and Zheng Wen\\ Adobe Research \and Yikun Xian \\ Rutgers University}
\date{}
\begin{document}

\maketitle

\begin{abstract}
A popular approach to selling online advertising is by a \emph{waterfall},
where a publisher makes sequential price offers to ad networks for an inventory,
and chooses the winner in that order. The publisher picks the order and prices to maximize her
revenue. A traditional solution is to learn the demand model and
then subsequently solve the optimization problem for the given demand model. This will
incur a linear regret. We design an online learning algorithm for solving this problem,
which interleaves learning and optimization, and prove that this algorithm
has sublinear regret. We evaluate the algorithm on both synthetic and real-world
data, and show that it quickly learns high-quality pricing strategies. This is the
first principled study of learning a waterfall design online by sequential experimentation.
\end{abstract}

\section{Introduction}
\label{sec:introduction}

Online publishers  typically generate revenue by placing advertisements. For example, when a user visits a webpage, there are locations called {\em slots} each of which may have an {\em impression} of an advertisement (ad).  
\begin{itemize}
\item[$\bullet$]
A slot may be sold directly to a specific brand advertiser. In that case, when a user arrives at the webpage, the publisher calls the advertiser and places the ad returned as the impression. 
\item[$\bullet$]
A slot may be sold via third parties such as Google's DoubleClick Ad Exchange. In this case, when a user arrives at the webpage, the publisher calls the ad exchange which in turn calls many intermediaries called {\em ad networks}. Each ad network has several advertisers as its customers and bids on behalf of one of its chosen customers. The ad exchange runs an auction among the bids and returns the winner to the publisher which becomes the ad impression for the user. 
\item[$\bullet$]
A slot may be sold directly to different ad networks. In this case, publishers typically run what is a called a {\em waterfall}. 
In the waterfall, the publisher chooses a permutation of the ad networks.
The publisher calls each ad network sequentially according to the permutation and offers a
price. The ad network has to bid above that price to win the opportunity to place the ad at that slot.
If the ad network does not make an adequate bid, the slot is offered to the next ad network and so on. 
The publisher gets to choose the permutation and reserve prices.
\end{itemize}

The three methods above 
trade off between control, margins and relationships between publishers, ad networks and advertisers. Often publishers combine these methods. For example, publishers might use direct deals for premium slots (like top of homepages), use waterfall variants for torso inventory, and Ad Exchanges for tail or remnant slots that did not get sold by the other methods. Readers who wish more background on the ad business and the role of waterfalls can see \cite{Zawadzinski,Vinnakot2017} or see DoubleClick's support pages\footnote{\url{https://support.google.com/dfp_premium/answer/3007370?hl=en}} \footnote{\url{https://www.sovrn.com/hub/learn/beginners-guide-dfp/}} . 

In this paper, we address the central question how publishers can design the waterfall.
We formalize this problem as learning the optimal order of ad networks with their offered prices. 
Our objective is to maximize the expected revenue of the publisher online in a sequence of $n$ steps, which is equivalent to minimizing the expected regret with respect to the best solution in hindsight. 

\begin{itemize}
\item[$\bullet$]
We formalize and study the problem of publisher learning and optimizing ad revenue from waterfall design 
 as an online learning problem with partial feedback.

\item[$\bullet$]
We develop a bandit style solution and propose a computationally-efficient UCB-like algorithm for this problem, which we call $\alg$.
Our learning problem is challenging for two reasons. 
First, the space of feasible solutions, all permutations of ad networks and their offered prices, is exponentially large. 
Second, our problem suffers from partial feedback, which is similar to that in \emph{cascading bandits} \cite{kveton15cascading,kveton15combinatorial}. 
In particular, if an ad network accepts an offer, the learning agent does not learn if any of the subsequent ad networks would have accepted their offered prices.
\item[$\bullet$]
We prove an upper bound on the expected $n$-step regret of algorithm $\alg$. The upper bound is sublinear in $n$ and polynomial in all other quantities of interest. The key step in our analysis is a new regret decomposition, which is of independent interest beyond our motivating domain of online advertising.
\item[$\bullet$]
We conduct extensive experiments on both synthetic and real-world data, which show that $\alg$ learns high-quality solutions. In addition, we investigate several practical settings that are encountered by publishers: 
\begin{itemize}
 \item[$\circ$] Publishers typically have many web pages with multiple ad slots per page. We show with real-world data that  waterfall learning for all ad slots yields good solutions when the ad networks behave similarly across the ad slots. 
 \item[$\circ$] Instead of going directly to ad networks, publishers can go to third parties that aggregate over ad networks. We show our algorithm $\alg$ can also learn to sell in this setting and it does not overfit.
\end{itemize}
\end{itemize}

Taken together, the above represents the first principled study of publisher revenue when using waterfall to optimize ad placement. 
\vspace*{-1mm}

\section{Selling in the Waterfall}
\label{sec:waterfall}

The problem of selling one ad slot in the waterfall can be formalized as follows.
Let $[K] = \{1, \dots, K\}$ be a set of $K$ ad networks. 
Let $Q = \{q_1, \dots, q_M\}$ be a set of $M$ prices, where $q_i \geq 0$ for all $i \in [M]$. 
We use discrete prices as in \cite{chak2010}; they are a first reasonable approach for the waterfall.
We will discuss this further in Section \ref{sec:analysis}.
Then any instance of our problem can be defined by a tuple $(K, Q, (\mathbb{P}_a)_{a \in [K]})$, where $\mathbb{P}_a$ is a probability distribution over the valuation of ad network $a$. Without loss of generality, we assume that all prices in $Q$ are in $[0, 1]$, and that the support of $\mathbb{P}_a$ is a subset of $[0, 1]$ for all $a \in [K]$. We assume that the valuation of any ad network $a$, $v_a \sim \mathbb{P}_a$, is drawn independently from the valuations of all other ad networks.

The publisher sells to the ad networks as follows. First, it chooses a permutation of the ad networks $(a_1, \dots, a_K) \in \Pi(K)$ and offered price $p_i \in Q$ for each ad network $a_i$, where $\Pi(K)$ is the set of all permutations of $[K]$. Then the publisher contacts the ad networks sequentially, from $a_1$ to $a_K$, and tries to sell the ad slot to them. In particular:
\begin{itemize}
\item[$\bullet$] Ad network $a_1$ is contacted first.

\item[$\bullet$] If ad network $a_i$ is contacted and $p_i \leq v_{a_i}$, the offered price is lower than or equal to the valuation of ad network $a_i$, the offer is accepted. Then the publisher earns $p_i$ and does not contact any of the remaining ad networks.

\item[$\bullet$] If ad network $a_i$ is contacted and $p_i > v_{a_{i}}$, the offered price is higher than the valuation of ad network $a_i$, the offer is rejected. Then the publisher contacts ad network $a_{i + 1}$ if $i < K$. If $i = K$, the publisher does not sell the ad slot and earns zero.
\end{itemize}

We denote by $A = ((a_i)_{i \in [K]}, (p_i)_{i \in [K]})$ the \emph{action} of the publisher. The set of \emph{feasible actions} is $\mathcal{A} = \Pi(K) \times Q^K$. For any ad network $a \in [K]$ and price $p \in Q$, we define \emph{acceptance probability} $\bar{w}(a, p) = P(p \leq v_a)$, the probability that ad network $a$ accepts price $p$ under valuation distribution $\mathbb{P}_a$. We refer to any pair of the ad network and price, $(a, p)$ for $a \in [K]$ and $p \in Q$, as an \emph{item}; and define the set of all items as $E = [K] \times Q$. Note that $|E| = K M$. For any action $A = ((a_i)_{i \in [K]}, (p_i)_{i \in [K]}) \in \mathcal{A}$ and weight function $u: [K] \times Q \to [0, 1]$, we define
\vspace*{-1mm}
\begin{align}
  f(A, u) = \sum_{i = 1}^K \left[\prod_{k = 1}^{i - 1} [1 - u(a_j , p_j)]\right] u(a_i, p_i) \, p_i\,.
  \label{eq:f}
\end{align}
This is the \emph{expected revenue of the publisher} under action $A$ and acceptance probabilities $u$. In particular, assuming the valuations of ad networks are independent, $\prod_{j = 1}^{i - 1} [1 - u(a_j , p_j)]$ is the probability that all ad networks before ad network $a_i$ do not accept their offered prices, which is equal to the probability that $a_i$ is contacted. Moreover, $u(a_i, p_i)$ is the conditional probability of ad network $a_i$ accepting its offered price $p_i$ after it is contacted.
The objective of the publisher is to maximize its expected revenue by choosing $A \in \mathcal{A}$,
\vspace*{-2mm}
\begin{align}
  A^\ast = \arg\max_{A \in \mathcal{A}} f(A, \bar{w})\,.
  \label{eq:waterfall_optimization}
\end{align}

\noindent We refer to $A^\ast$ as the \emph{optimal solution}.

\subsection{Oracles}
\label{sec:oracles}

No polynomial-time algorithm is known for solving all instances of problem (\ref{eq:waterfall_optimization}). However, computationally-efficient approximations exist \cite{chak2010,Chawla2010}. In this work, we consider approximation algorithms $L$ whose inputs are a weight function $u: [K] \times Q \to [0, 1]$, the number of ad networks $K$, and a set of prices $Q$; and the output is $L(u, K, Q) \in \mathcal{A}$. We say that algorithm $L$ is a \emph{$\gamma$-approximation} for $\gamma \in (0,1]$ if $f(L(u, K, Q), u) \geq \gamma \max_{A \in \mathcal{A}} f(A, u)$ for any $u$.

Note that when ad networks are assigned prices, the optimal order of the ad networks is in the descending order of their assigned prices. This follows from the definition of the revenue in \eqref{eq:f}. Since the output of $L$ can be always ordered to satisfy this property, we assume that this property is satisfied without loss of generality. We consider two oracles in this paper, greedy and based on linear programming (LP).

\begin{algorithm}[t]
  \caption{Greedy oracle.}
  \label{alg:alg_greedy}
  \begin{algorithmic}
    \REQUIRE Weight function $u$, number of ad networks $K$, prices $Q$
    \STATE \vspace{-0.1in}
    \FORALL{$a \in [K]$}
      \STATE $p^\ast(a) \gets \arg\max_{p \in Q} u(a, p) \, p$
    \ENDFOR
    \STATE Let $\pi$ be any permutation of $[K]$ such that $p^\ast(\pi(1)) \geq \ldots \geq p^\ast(\pi(K))$
    \STATE $A \gets ((\pi(i))_{i \in [K]}, (p^\ast(\pi(i)))_{i \in [K]})$
    \STATE \vspace{-0.1in}
    \ENSURE Publisher action $A$
  \end{algorithmic}
\end{algorithm}

The pseudocode of the \emph{greedy oracle} is in \cref{alg:alg_greedy}. The oracle has two main stages. First, it assigns to each ad network $a \in [K]$ the price that maximizes the expected revenue of that ad network conditioned on being contacted, $p^\ast(a)$. Second, it orders the ad networks in the descending order of their assigned prices. This oracle is easy to implement and performs well in our experiments. It does not have any approximation guarantee though.

\begin{algorithm}[t]
  \caption{LP oracle.}
  \label{alg:alg_lp}
  \begin{algorithmic}
    \REQUIRE Weight function $u$, number of ad networks $K$, prices $Q$
    \STATE \vspace{-0.1in}
    \STATE Solve 
    \begin{align*}
      \displaystyle
      \max \quad & \sum_{a = 1}^K \sum_{p \in Q} p \, y_{a, p} \\
      \text{s.t.} \quad & \forall a \in [K], p \in Q:
      y_{a, p} \leq u(a, p) \, x_{a, p}\,, \ \
      x_{a, p} \geq 0\,, \ \
      y_{a, p} \geq 0\,; \\
      & \sum_{a = 1}^K \sum_{p \in Q} y_{a, p} \leq 1\,; \quad
      \forall a \in [K]: \sum_{p \in Q} x_{a, p} \leq 1;
    \end{align*}
    \vspace{-0.1in}
    \STATE Let $\tau^\ast$ be the dual variable associated with the last constraint of the above LP \vspace*{-2mm}
    \STATE
    \FORALL{$a \in [K]$}
      \STATE $p^\ast(a) \gets \arg\max_{p \in Q} u(a, p) \, (p - \tau^\ast)$
      \IF{$(\max_{p \in Q} u(a, p) \, (p - \tau^\ast) = 0)$ and $(\tau^\ast \in Q)$}
        \STATE $p^\ast(a) \gets \tau^\ast$
      \ENDIF
    \ENDFOR
    \STATE Let $\pi$ be any permutation of $[K]$ such that $p^\ast(\pi(1)) \geq \ldots \geq p^\ast(\pi(K))$
    \STATE $A \gets ((\pi(i))_{i \in [K]}, (p^\ast(\pi(i)))_{i \in [K]})$
    \STATE \vspace{-0.1in}
    \ENSURE Publisher action $A$
  \end{algorithmic}
\end{algorithm}

The pseudocode of the \emph{LP oracle} is in \cref{alg:alg_lp}. The oracle is based on linear programming and is a \emph{$(1-\frac{1}{e})$-approximation} algorithm \cite{chak2010}. The oracle has three main stages. First, it solves an LP to obtain the value of the dual variable corresponding to the last constraint $\tau^\ast$. Second, it assigns to each ad network $a \in [K]$ the price $p$ that maximizes $u(a, p) \, (p - \tau^\ast)$. This price is denoted by $p^\ast(a)$. Finally, it orders the ad networks in the descending order of their assigned prices.

The optimized variables in the linear program are $x_{a, p}$ and $y_{a, p}$, for $a\in [K]$ and $p \in Q$. The variable $x_{a, p}$ represents the probability that ad network $a$ is offered price $p$. The variable $y_{a, p}$ represents the joint probability that ad network $a$ is offered price $p$ and accepts. The objective is the expected return. The constraints guarantee that the probabilities are consistent and non-negative.

Both discussed oracles can find high-quality strategies for selling a single ad slot in the waterfall. In practice, publishers may be interested in maximizing the revenue from all of their many ad slots. We return to these practical issues in \cref{sec:experiments}.
\vspace*{-2mm}

\section{Waterfall Bandit}
\label{sect:online_alg}

As we discussed earlier, publishers often do not know the valuation distributions of ad networks in advance. However, since they repeatedly sell ad slots to the ad networks, they can learn it. This motivates our study of the waterfall as a \emph{multi-armed bandit (MAB)} \cite{lai85asymptotically,auer02finitetime}, which we call a waterfall bandit. Formally, the \emph{waterfall bandit} is a tuple $(K, Q, (\mathbb{P}_a)_{a \in [K]})$, where the valuation distributions of ad networks $(\mathbb{P}_a)_{a \in [K]}$ are unknown to the publisher. Let $v_{t, a} \sim \mathbb{P}_a$ be the stochastic valuation of ad network $a$ at time $t$. We assume that $v_{t, a}$ is drawn independently from $\mathbb{P}_a$, both across ad networks and in time.

The publisher repeatedly sells to ad networks for $n$ times. At each time $t$, based on past observations, the publisher adaptively chooses \emph{action} $A_t = ((a_{t, i})_{i \in [K]}, (p_{t, i})_{i \in [K]})$, where $a_{t, i} \in [K]$ and $p_{t, i} \in Q$ are the $i$-th contacted ad network at time $t$ and its assigned price, respectively. The publisher receives \emph{feedback} $B_t \in [K] \cup \{\infty\}$, which is the index of the first ad network that accepts its offered price. In particular, when $B_t = i$ for $i \in [K]$, ad network $a_{t, i}$ accepts its offered price $p_{t, i}$ and the \emph{reward} of the publisher is $p_{t, i}$. On the other hand, when $B_t = \infty$, no ad network accepts its offered price and the \emph{reward} of the publisher is zero. Because the ad networks are contacted sequentially, the publisher knows that the offered price $p_{t, i}$ is not accepted by any ad network $a_{t, i}$ such that $i < B_t$ for $i \in [K]$. In summary, the publisher observes responses from all ad networks $a_{t, i}$ such that $i \leq B_t$ for $i \in [K]$, and we refer to these ad networks as being \emph{observed}.
 
We evaluate the performance in the waterfall bandit by a form of regret, where the cumulative reward of the optimal solution $A^\ast$ is weighted by a factor of $\gamma$. In particular, the \emph{scaled $n$-step regret} is defined as
\vspace*{-3mm}
\begin{align}
  R^\gamma(n) = n \gamma f(A^\ast, \bar{w}) - \mathbb{E}\left[\sum_{t = 1}^n f(A_t, \bar{w})\right]\,,
  \label{eq:regret}
\end{align}
where $\gamma > 0$ is the aforementioned scaling factor and $A^\ast$ is the optimal solution in \eqref{eq:waterfall_optimization}. The reason for the scaling factor is that no polynomial-time algorithms exists for solving our offline optimization problem (\cref{sec:oracles}). Therefore, it is unreasonable to assume that we can learn such solutions online, and it is reasonable to compete with the best offline $\gamma$-approximation. Note that the scaled $n$-step regret reduces to the standard $n$-step regret when $\gamma = 1$.

\noindent {\bf Naive Solutions.}
The waterfall bandit can be solved as a multi-armed bandit problem where the expected revenue of each action $A \in \mathcal{A}$ is estimated separately. This solution would not be statistically efficient. The reason is that the number of actions is $|\mathcal{A}| = K! M^K$, and so a naive unstructured solution would have exponential regret in $K$.

The key structure in our learning problem is that the publisher receives feedback on individual ad networks in each action. This setting is reminiscent of stochastic combinatorial semi-bandits \cite{gai12combinatorial,chen14combinatorial,kveton15tight,wen15efficient}, which can be solved statistically efficiently. The challenge is that the publisher may not receive feedback on all ad networks. More specifically, when $a_{t, i} > B_t$ at time $t$, the publisher does not know if ad network $a_{t, i}$ would accepted price $p_{t, i}$ if it was offered that price. Therefore, our problem cannot be formulated and solved as a stochastic combinatorial semi-bandit.

A similar form of partial feedback was studied in cascading bandits \cite{kveton15cascading}, where the learning agent receives feedback on a ranked list of items, for all items in the list up to the first clicked item. The difference in our setting is that Kveton et al. \cite{kveton15cascading} do not consider pricing. Nevertheless, it is reasonable to assume that a similar learning algorithm, which maintains upper confidence bounds on all acceptance probabilities $\bar{w}(a, p)$, for any $a \in [K]$ and $p \in Q$, could solve our problem. We present such an algorithm in \cref{sec:algorithm}.
\vspace*{-1mm}

\section{$\alg$ Algorithm}
\label{sec:algorithm}
\vspace*{-1mm}

In this section, we propose a UCB-like algorithm for the waterfall bandit, which we call $\alg$. The algorithm is presented in \cref{alg:alg1}.

The inputs to $\alg$ are an approximation oracle $L$, the number of ad networks $K$, and a set of prices $Q$. At each time $t$, $\alg$ proceeds as follows. First, it computes an \emph{upper confidence bound (UCB)}
  $U_t(e) = \min \{\hat{w}_{T_{t - 1}(e)}(e) + c_{t - 1, T_{t - 1}(e)}, 1\}$
on the acceptance probability $\bar{w}(e)$ of all $e = (a, p) \in E$, where $E$ is the set of all ad-network and price pairs (\cref{sec:waterfall}), $\hat{w}_s(e)$ is the fraction of accepted offers in $s$ trials when ad network $a$ is offered price $p$, $T_t(e)$ is the number of times that ad network $a$ is offered price $p$ up to time $t$, and $c_{t, s} = \sqrt{(1.5 \log t) / s}$ is the radius of a confidence interval around $\hat{w}_s(e)$ after $t$ steps such that $\bar{w}(e) \in [\hat{w}_s(e) - c_{t, s}, \hat{w}_s(e) + c_{t, s}]$ holds with a high probability. We trim $U_t(e)$ at $1$ so that it can be interpreted as a probability.

After the UCBs are computed, $\alg$ computes its action at time $t$, $A_t = ((a_{t, i})_{i \in [K]}, (p_{t, i})_{i \in [K]})$, using the oracle $L$ and UCBs $U_t$. Then it takes that action and receives feedback $B_t$. Finally, $\alg$ updates its statistics for all \emph{observed} pairs of ad networks and prices, $(a_{t, i}, p_{t, i})$ such that $i \leq \min \{B_t, K\}$.

\begin{algorithm}[t]
  \caption{Algorithm $\alg$.}
  \label{alg:alg1}
  \begin{algorithmic}
  \REQUIRE Oracle $L$, number of ad networks $K$, prices $Q$
  \STATE \vspace{-0.1in}
  \FORALL[Initialization]{$e = (a, p) \in E$}
    \STATE Offer price $p$ to ad network $a$ once
    \STATE $\hat{w}_0(e) \gets \mathbb{1}\{\text{$a$ accepts price $p$}\}, \ T_0(e) \gets 1$
  \ENDFOR
  \STATE
  \FOR{$t = 1, \dots, n$}
    \FORALL[Compute UCBs]{$e = (a, p) \in E$}
      \STATE $U_t(e) = \min \{\hat{w}_{T_{t - 1}(e)}(e) + c_{t - 1, T_{t - 1}(e)}, 1\}$
    \ENDFOR
    \STATE $A_t \gets L(U_t, K, Q)$ \COMMENT{Compute the action using the oracle and $U_t$}
    \STATE Observe $B_t \in \{1, \dots, K, +\infty\}$ \COMMENT{Run the waterfall and get feedback}
    \STATE
    \STATE $T_t(e) \gets T_{t - 1}(e)$, $\forall e \in E$ \COMMENT{Update statistics}
    \FOR{$i = 1, \dots, \min \{B_t, K\}$}
      \STATE $e \gets (a_{t, i}, p_{t, i})$
      \STATE $T_t(e) \gets T_{t}(e) + 1$
      \STATE $\displaystyle \hat{w}_{T_{t}(e)}(e) \gets \frac{T_{t - 1}(e) \hat{w}_{T_{t - 1}(e)}(e) + \mathbb{1}\{B_t = i\}}{T_t(e)}$
    \ENDFOR
  \ENDFOR
  \end{algorithmic}
\end{algorithm}

\vspace*{-1mm}

\section{Analysis}
\label{sec:analysis}

The regret of $\alg$ is bounded in the following theorem.

\begin{theorem}
\label{thm::thm1} Let $\alg$ be run in the waterfall bandit with a $\gamma$-approximation oracle $L$. Then
  $R^\gamma(n) \leq 4 M K \sqrt{1.5 n \log n} + \gamma \frac{\pi^2}{3} M K\,,$
where $K$ is the number of ad networks and $M$ is the number of prices in $Q$.
\end{theorem}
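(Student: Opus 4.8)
The plan is to follow the UCB analysis template for combinatorial semi-bandits with cascade-style feedback (as in cascading bandits), the new ingredient being a regret decomposition tailored to the revenue function $f$ in \eqref{eq:f}. Write $e_{t,i} = (a_{t,i}, p_{t,i})$, and recall that, without loss of generality (\cref{sec:oracles}), every action output by the oracle and the optimum $A^\ast$ list prices in nonincreasing order $p_1 \ge \dots \ge p_K$. First I would introduce the favorable event $\mathcal{E}_t = \{\, |\hat w_{T_{t-1}(e)}(e) - \bar w(e)| \le c_{t-1, T_{t-1}(e)} \ \forall e \in E \,\}$, which is measurable with respect to the history observed before round $t$. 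Since each observation of an item $e = (a,p)$ feeds the empirical average a fresh Bernoulli$(\bar w(e))$ sample that is independent of the past and of the other ad networks' responses in round $t$, Hoeffding's inequality together with a union bound over the $|E| = MK$ items and the at most $t$ possible values of $T_{t-1}(e)$ gives $\mathbb{P}(\overline{\mathcal{E}_t}) = O(MK\,t^{-2})$, because $2 s\, c_{t-1,s}^2 = 3\log(t-1)$. As $f(\cdot,\bar w) \le \max_i p_i \le 1$, the per-round regret never exceeds $\gamma$, so the rounds on which $\mathcal{E}_t$ fails contribute at most $\gamma \sum_{t}2MK t^{-2} \le \gamma \frac{\pi^2}{3}MK$ to $R^\gamma(n)$; this is exactly the lower-order term.

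Next I would establish optimism on $\mathcal{E}_t$. The map $u \mapsto f(A,u)$ is monotone nondecreasing for any fixed $A$ with nonincreasing prices: writing $f$ as an affine function of a single coordinate $u(a_j,p_j)$, one gets $\partial f/\partial u(a_j,p_j) = \big(\prod_{k<j}[1-u(a_k,p_k)]\big)(p_j - R_{j+1})$, where $R_{j+1} \in [0,p_{j+1}] \subseteq [0,p_j]$ is the expected revenue from positions $j+1,\dots,K$ conditional on reaching position $j+1$, so the derivative is nonnegative. On $\mathcal{E}_t$ we have $U_t(e) \ge \bar w(e)$ for every $e$, hence $f(A^\ast, U_t) \ge f(A^\ast, \bar w)$, and combining this with the $\gamma$-approximation guarantee of $L$ yields $\gamma f(A^\ast,\bar w) \le \gamma f(A^\ast, U_t) \le \gamma \max_A f(A,U_t) \le f(A_t,U_t)$. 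Thus on $\mathcal{E}_t$ the round-$t$ regret is at most $f(A_t,U_t) - f(A_t,\bar w)$.

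The heart of the argument is the regret decomposition bounding $f(A_t,U_t) - f(A_t,\bar w)$. Hybridizing coordinate by coordinate from $\bar w$ to $U_t$ along the chain of $A_t$ and using the derivative identity above, this difference equals $\sum_{i=1}^K \big(\prod_{k<i}[1-U_t(e_{t,k})]\big)(p_{t,i} - R_{t,i+1})\big(U_t(e_{t,i}) - \bar w(e_{t,i})\big)$, where $R_{t,i+1}$ is the continuation value from position $i+1$ computed under $\bar w$. Bounding the price-minus-continuation factor by $p_{t,i} - R_{t,i+1} \le 1$ (again using the descending order of prices) and, on $\mathcal{E}_t$, $\prod_{k<i}[1-U_t(e_{t,k})] \le \prod_{k<i}[1-\bar w(e_{t,k})] =: \rho_{t,i}$, one gets $f(A_t,U_t) - f(A_t,\bar w) \le \sum_i \rho_{t,i}\,(U_t(e_{t,i}) - \bar w(e_{t,i}))$. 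The crucial observation is that $\rho_{t,i}$ is precisely the conditional probability, given the history, that item $e_{t,i}$ is observed in round $t$: ad network $a_{t,i}$ is contacted iff the first $i-1$ ad networks all reject, independently with probabilities $1 - \bar w(e_{t,k})$. Since $U_t(e_{t,i}) - \bar w(e_{t,i})$ and $\mathbb{1}\{\mathcal{E}_t\}$ are history-measurable, taking conditional expectations turns $\rho_{t,i}$ into $\mathbb{1}\{e_{t,i}\text{ observed}\}$, so the favorable part of the regret is at most $\mathbb{E}\big[\sum_t \mathbb{1}\{\mathcal{E}_t\}\sum_i \mathbb{1}\{e_{t,i}\text{ observed}\}(U_t(e_{t,i})-\bar w(e_{t,i}))\big]$.

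The remaining step is the usual UCB bookkeeping. On $\mathcal{E}_t$, $U_t(e) - \bar w(e) \le 2c_{t-1,T_{t-1}(e)}$, and regrouping the double sum by item, for each fixed $e$ the counter $T_{t-1}(e)$ takes a distinct value in $\{1,\dots,n\}$ every time $e$ is observed, so $\sum_{t=1}^n \mathbb{1}\{e\text{ observed in round }t\}\,2c_{t-1,T_{t-1}(e)} \le 2\sum_{s=1}^n \sqrt{1.5\log n/s} \le 4\sqrt{1.5\,n\log n}$; summing over the $MK$ items bounds the favorable part by $4MK\sqrt{1.5\,n\log n}$, and adding the lower-order term from the first paragraph gives the theorem. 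The main obstacle is the regret decomposition in the third paragraph: one must exploit the specific structure of $f$ — in particular that the oracle sorts prices in descending order, which is what makes $f$ monotone in the weights and keeps the per-position factor $p_{t,i} - R_{t,i+1}$ in $[0,1]$ — so that the error telescopes into a weighted sum of confidence widths whose weights coincide with the observation probabilities. The edge cases $t \le 2$ (where $\log(t-1)$ is undefined) are handled by the standard convention of starting the confidence-interval bookkeeping once each item has been sampled, and do not affect the final bound.
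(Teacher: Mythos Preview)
Your proposal is correct and follows essentially the same route as the paper: the good/bad event split with the Hoeffding bound yielding the $\gamma\frac{\pi^2}{3}MK$ term, monotonicity of $f(A,\cdot)$ for actions with descending prices to obtain optimism, the decomposition $f(A_t,U_t)-f(A_t,\bar w)\le\sum_i\big(\prod_{k<i}[1-\bar w(e_{t,k})]\big)(U_t(e_{t,i})-\bar w(e_{t,i}))$, identification of the prefix product with the observation probability, and the standard $\sum_s 1/\sqrt{s}$ bookkeeping. The only cosmetic differences are that the paper proves monotonicity by induction rather than your derivative argument, and obtains the decomposition by directly upper-bounding $\prod_{k<i}[1-U_t]\le\prod_{k<i}[1-\bar w]$ in the first sum (their step~(b)) rather than via your exact hybridization identity before bounding $p_{t,i}-R_{t,i+1}\le 1$; both land on the same inequality.
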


\begin{proof}
We sketch the proof of Theorem~\ref{thm::thm1} below. The detailed proof is in Appendix~\ref{subsub:thm_proof}.

The proof proceeds as follows. First, we show that $f(A, u)$ is monotone in weight function $u$ for any fixed action $A$ (Lemma~\ref{thm::prop1} in Appendix~\ref{subsub:thm_proof}). Second, based on this monotonicity property, we bound the per-step scaled regret at time $t$, $R_t^\gamma = \gamma f(A^\ast, \bar{w}) - f(A_t, \bar{w})$, under ``good event" $\bar{\mathcal{E}}_t$, that all $\bar{w}(e)$ are inside of their confidence intervals at time $t$. This novel regret decomposition is presented in Lemma~\ref{lemma::lemma1} below.

\begin{restatable*}{lemma}{lemmadecomp}
\label{lemma::lemma1} Conditioned on ``good event" $\bar{\mathcal{E}}_t$, the per-step scaled regret at time $t$ is bounded as
  $R_t^\gamma \leq \sum_{i = 1}^K \mathbb{E}_t\left[\mathbb{1}\{G_{d_{t, i}, t}\}\right] \phi_{d_{t, i}, t}\,,$
where $d_{t, i} = (a_{t, i}, p_{t, i})$, $\phi_{e, t} = U_t(e) - \bar{w}(e)$, and $G_{e, t}$ is the event that item $e$ is observed at time $t$.
\end{restatable*}

The proof of Lemma~\ref{lemma::lemma1} is in Appendix~\ref{subsub:thm_proof}. Note that the lemma decomposes the regret at time $t$ into those of observed items. Based on the definition of ``good event" $\bar{\mathcal{E}}_t$ and some algebra, we have
  $\sum_{t = 1}^n \mathbb{E}[\mathbb{1}\{\bar{\mathcal{E}}_t\} R^\gamma_t]
   \leq 2 \sqrt{1.5 \log n} \sum_{e \in E} \sum_{t = 1}^n \frac{1}{\sqrt{t}} 
   \leq 4 M K \sqrt{1.5 \, n \log n}\,.$
On the other hand, we bound the regret under ``bad event" $\mathcal{E}_t$, that at least one $\bar{w}(e)$ is outside of its confidence interval, using Hoeffding's inequality. Specifically, we get
  $\sum_{t = 1}^n \mathbb{E}[\mathbb{1}\{\mathcal{E}_t\} R^\gamma_t] \leq \gamma \frac{\pi^2}{3} M K$.
The bound in Theorem~\ref{thm::thm1} follows directly from combining the above two inequalities. 
\end{proof}

Theorem~\ref{thm::thm1} provides a $O(M K \sqrt{n \log n})$ gap-free upper bound on the scaled $n$-step regret of $\alg$. We discuss the tightness of this bound below. The dependence on $\sqrt{n \log n}$ is standard in gap-free bounds in similar problems \cite{kveton15tight,kveton15combinatorial}, and it is considered $O(\sqrt{\log n})$ from being tight. The dependence $M K$ is expected, since $\alg$ estimates $M K$ values, one for each ad network and price. However, linear dependence on $M K$ may not be tight. We obtain it for two reasons. First, $\alg$ learns $\bar{w}(e)$ separately for each item $e \in E$, and does not exploit any generalization across ad networks and prices. Second, our bound is proved directly from the ``self-normalization" of confidence interval radii (Appendix~\ref{subsub:thm_proof}), not through a gap-dependent bound as in related papers \cite{kveton15tight,kveton15combinatorial}. 

Our analysis also provides a sublinear regret bound with respect to the optimal continuous-price solution. In particular, let all prices be in $[0, 1]$ and suppose the publisher intends to maximize expected revenue up to time $n$. When the prices are discretized on a uniform grid with $M$ points over $[0, 1]$, the maximum instantaneous loss due to discretization is $C K / M$, where $C$ is a problem-specific Lipschitz factor. Under this assumption, the scaled $n$-step regret with respect to the optimal continuous-price solution is bounded by $O(M K \sqrt{n \log n} + \gamma M K) + n C K / M$. Now we choose 
\begin{align*}
M = \sqrt{nC/(\sqrt{n\log n} + \gamma)} 
\end{align*}
and we get a $O\left(K\sqrt{C\cdot n}  \sqrt{\sqrt{n\log n} + \gamma} \right)$ regret bound.

\section{Experiments}
\label{sec:experiments}

In this section, we empirically evaluate the effectiveness of our algorithm.  
We also investigate the settings where our algorithm may be deployed in practice.

\subsection{Methods and Metrics}
\label{sec:methods and metrics}

The input to $\alg$ is an \emph{oracle}, which orders ad networks and assigns prices to them for any model $u$ (\cref{sec:oracles}). We experiment with two oracles, \approach{Greedy} (\cref{alg:alg_greedy}) and \approach{LP} (\cref{alg:alg_lp}).
We compare the following offline and online approaches, where \approach{X} refers to one of the aforementioned oracles:
\begin{enumerate}
\item \approach{Offline-X} is an offline approximation algorithm. The input to the algorithm are all acceptance probabilities, $\bar{w}(a, p)$ for any $a \in [K]$ and $p \in Q$. The probabilities are used by oracle $X$ to order ad networks and assign prices to them. The ordering and prices are computed only once and used in all steps. Although this approach is unrealistic because it assumes that the acceptance probabilities are known, it is a useful baseline for evaluating revenue loss due to not knowing the dynamics of the system.

\item \approach{UCB-X} is the $\alg$ in \cref{alg:alg1}.

\item \approach{Exp$^2$-X} is an online approximation algorithm, which explores in the first $n_0$ steps and then exploits \cite{sutton1998reinforcement}. In the first $n_0$ steps, the algorithm offers random prices to randomly ordered ad networks and collects observations. Then it estimates all acceptance probabilities from its observations. The probabilities are used by oracle $X$ to order ad networks and assign prices to them. The ordering and prices are computed in step $n_0$ and then used in all remaining steps. The exploration parameter $n_0$ tends to be small in practice because random exploration hurts experience.
\end{enumerate}

The performance of all compared algorithms is evaluated by their \emph{expected $n$-step reward},
\begin{align*}
  r(n) = \frac{1}{n} \sum_{t = 1}^n \sum_{i = 1}^K \!
  \left[\prod_{j = 1}^{i - 1} \mathbb{1}\!\left\{p_{t, j} > v_{t, a_{t, j}}\right\}\right] \! 
  \mathbb{1}\!\left\{p_{t, i} \leq v_{t, a_{t, i}}\right\} \, p_{t, i}\,,
\end{align*}
where $A_t = \left((a_{t, i})_{i \in [K]}, (p_{t, i})_{i \in [K]}\right)$ is the action of the publisher at time $t$ and $v_{t, a}$ is the valuation of ad network $a$ at time $t$. We choose this metric instead of the scaled regret in \eqref{eq:regret} because the optimal solution to our offline optimization problem cannot be computed efficiently (\cref{sec:oracles}). The optimal solution is necessary to evaluate \eqref{eq:regret}.

We report the expected reward in \emph{hypothetical dollars} to highlight the business value of our algorithm. 

\subsection{Synthetic Data}
\label{sec:synthetic}

In this experiment, we show that the expected reward of our algorithm approaches that of the best approximation in hindsight. We also demonstrate that our algorithm $\alg$ outperforms \approach{Exp$^2$-X} irrespective of the oracle.

We consider a synthetic problem with a single ad slot and $K = 4$ ad networks. The valuation of each ad network at time $t$ is drawn i.i.d. from beta distribution $\mathrm{Beta}(\alpha, \beta)$, which is parameterized by $\alpha$ and $\beta$. 
As a result, the minimum and maximum valuations of each ad network are $\$0$ and $\$1$ respectively.
The valuation of ad network $1$ is \emph{high}, $v_{t, 1} \sim \mathrm{Beta}(5, 2)$. The valuations of the remaining three ad networks are \emph{low}, $v_{t, a} \sim \mathrm{Beta}(2, 5)$ for any $a \in [4] \setminus \{1\}$. The learning problem is to offer a high price to the ad network $1$, ahead of the other ad networks.

The prices in all algorithms are discretized to $11$ price levels, namely $Q = \{(p - 1) / 10: p \in [11]\}$.
We experiment with both \approach{Greedy} and \approach{LP} oracles. The number of exploration steps in \approach{Exp$^2$-X} is $n_0 = 500$. In this experiment, this setting yields approximately $11$ observations on average for each pair of the ad network and price.

\begin{figure}[t]
  \centering
  \includegraphics[width=1.546in,height=1.35in]{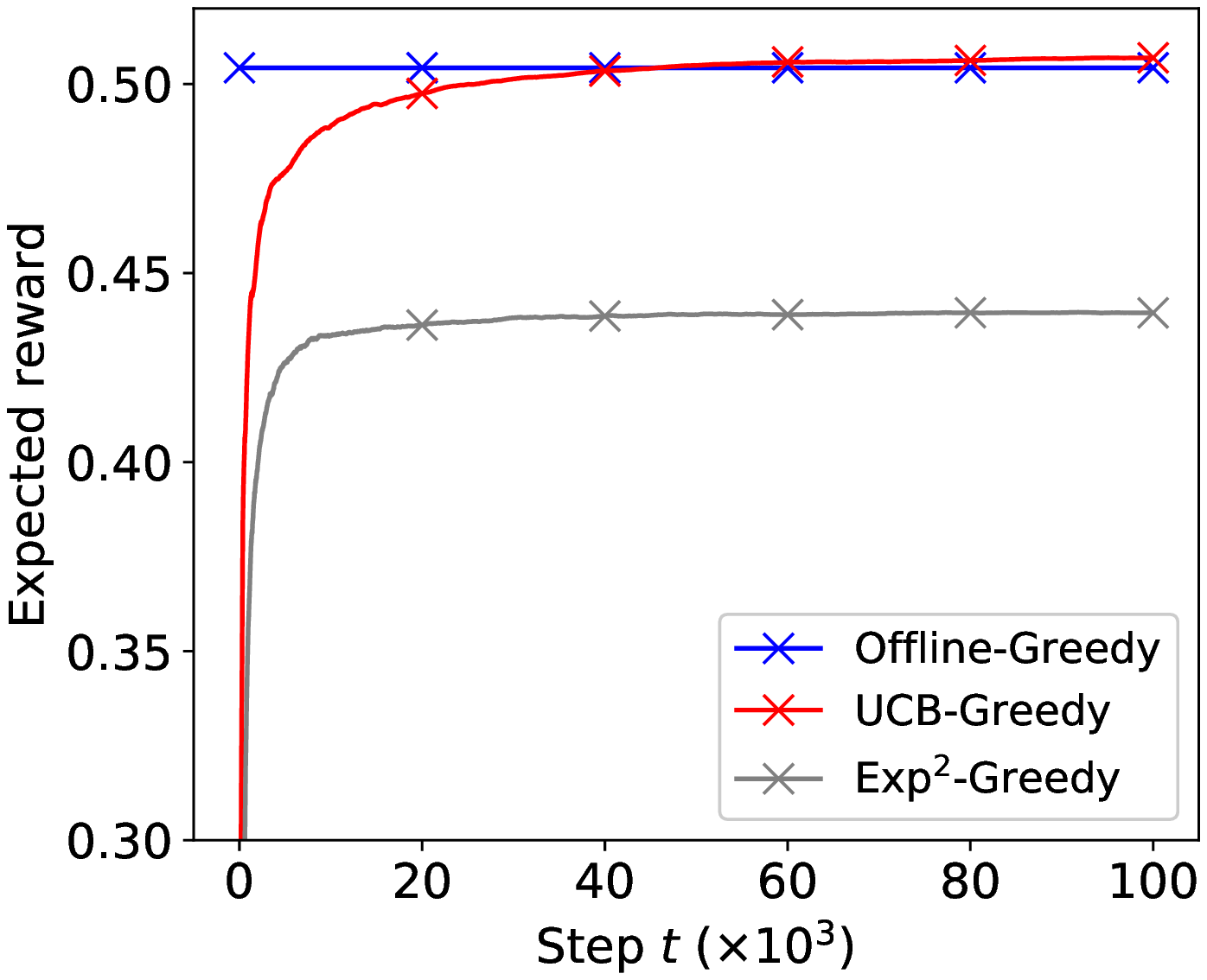}
  \includegraphics[width=1.546in,height=1.35in]{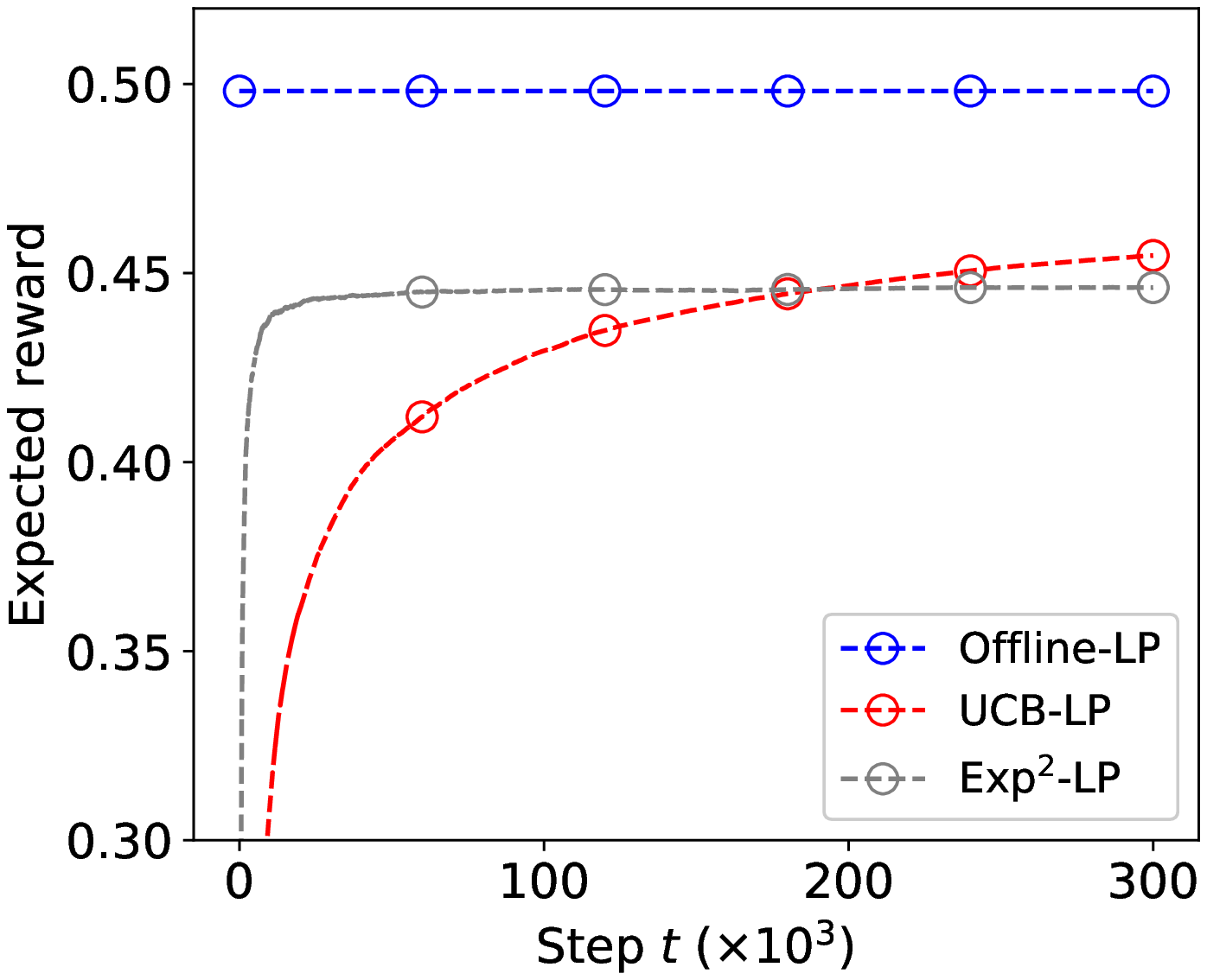} \\
  \hspace{0.18in} (a) \hspace{1.35in} (b) \vspace{-0.05in}
  \caption{Results on synthetic data with a single ad slot. (a) Expected reward of three approaches with \approach{Greedy} oracle. (b) Expected reward of the same approaches with \approach{LP} oracle.} 
  \label{fig:exp_case_1}
\end{figure}

Our results with \approach{Greedy} oracle are reported in \cref{fig:exp_case_1}a. We observe two major trends. 
First, \approach{UCB-Greedy} learns quickly. In particular, its expected reward is around $0.5$ dollars in $30$k steps and exceeds $0.5$ dollars after $50$k steps. We note that \approach{UCB-Greedy} slightly outperforms \approach{Offline-Greedy} after $50$k steps. Indeed, since \approach{Greedy} oracle is not guaranteed to return the optimal solution, it is possible to learn a better approximation online than offline.
Second, \approach{Exp$^2$-Greedy} is consistently worse than \approach{UCB-Greedy} and its expected reward is only $0.44$ dollars in $100$k steps. This shows that $n_0 = 500$ random exploration steps in \approach{Exp$^2$-Greedy} are less statistically efficient than more intelligent continuous exploration in \approach{UCB-Greedy}.

The results with \approach{LP} oracle are reported in \cref{fig:exp_case_1}b. We observe similar trends to those in \cref{fig:exp_case_1}a. One minor difference is that \approach{UCB-LP} performs worse than \approach{Exp$^2$-LP} in the first $150$k steps. However, it outperforms \approach{Exp$^2$-LP} after $200$k steps and its expected reward approaches $0.45$ dollars in $300$k steps. 
The reason \approach{UCB-LP} learns more slowly than \approach{UCB-Greedy} is that the linear program in \approach{UCB-LP} is not sensitive to small perturbations of model dynamics. That is, minor changes in the optimistic estimates of acceptance probabilities do not affect the output of the linear program. Therefore, \approach{UCB-LP} explores all parameters of the model in the descending order of prices, which is inefficient. This is because higher prices are always preferred if the acceptance probabilities at all prices do not differ much. Only when the acceptance probabilities at higher prices become lower, \approach{UCB-LP} explores other lower prices.

\begin{figure}[t]
  \centering
  \includegraphics[height=1.35in]{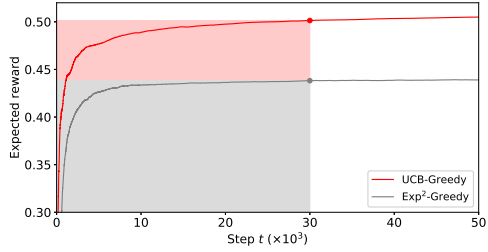}
  \caption{Illustration of publisher insights. The gray rectangle represents the revenue of \approach{Exp$^2$-Greedy} in $30\mathrm{k}$ steps, $0.438 \times 30\mathrm{k} = 13.14\mathrm{k}$ dollars. The red rectangle represents the difference in the revenues of \approach{UCB-Greedy} and \approach{Exp$^2$-Greedy} in $30\mathrm{k}$ steps, $(0.501 - 0.438) \times 30\mathrm{k} = 1.89\mathrm{k}$ dollars.}
  \label{fig:exp_case_1c}
\end{figure}

\subsection{Publisher Insights}
\label{sec:publisher_insights0}

From the perspective of a publisher, our plots of the expected reward in the first $t$ steps can answer the following questions:
(1) What is the revenue of a strategy up to step $t$?
(2) What is the difference in revenues of strategies $A$ and $B$ up to step $t$?

The first question can be answered as follows. The revenue of a strategy up to step $t$ is equal to its expected reward up to step $t$ times $t$. 
In \cref{fig:exp_case_1}a, for instance, the expected reward of \approach{Offline-Greedy} in $100$k steps is $0.504$ dollars. Therefore, the revenue of \approach{Offline-Greedy} in $100$k steps is $50.4$k dollars. 
The expected reward of \approach{UCB-Greedy} in $100$k steps is $0.507$ dollars. Therefore, the revenue of \approach{UCB-Greedy} in $100$k steps is $50.7$k dollars. 
By the same line of reasoning, the revenue of \approach{Exp$^2$-Greedy} in $100$k steps is $44$k dollars.

The second question can be answered as follows. The difference in revenues of strategies $A$ and $B$ up to step $t$ is equal to the difference of their expected rewards up to step $t$ times $t$. We illustrate this in \cref{fig:exp_case_1c}. The expected rewards of \approach{UCB-Greedy} and \approach{Exp$^2$-Greedy} in $30$k steps are $0.501$ and $0.438$ dollars, respectively. 
Therefore, the difference in their expected rewards is $0.063$ dollars, and the difference in their revenues in $30$k steps is $1.89$k dollars.
This increase in revenue is a result of the improved statistical efficiency of \approach{UCB-Greedy} relative to \approach{Exp$^2$-Greedy}.

\subsection{Real Data}

\subsubsection{Selling a Single Ad Slot}
\label{sec:single ad slot}
In this experiment, we show that our algorithm can learn to sell a single ad slot, whose dynamics is estimated from a real-world dataset. 

We experiment with a real-world dataset of Real-Time Bidding (RTB) \emph{iPinYou} \cite{Liao2014}. This dataset contains information regarding bidding on ad slots, such as the identity of the ad slot, the winning advertiser, and the winning price. We treat each \emph{advertiser} as an ad \emph{network}. Perhaps surprisingly, the winning price of any advertiser on any ad slot does not change throughout the dataset. This is common in practice because many advertisers do not behave very strategically.

We estimate the valuations of ad networks as follows. Fix the ad slot. Let $n_a$ be the number of times that advertiser $a$ wins bidding and $v_a$ be its winning price, which does not change throughout the dataset. Then ad network $a$ accepts price $p$, independently of all other ad networks, with probability 
\begin{align}
  \bar{w}(a, p)=
  \begin{cases}
    \displaystyle \mathbb{1}\{p \leq v_a\} \frac{n_a}{\sum_{a'} n_{a'}}\,, & p > 0\,; \\
    1\,, & p = 0\,;
  \end{cases}
  \label{eq:exp_case_2}
\end{align}
where $n_a / \sum_{a'} n_{a'}$ is the frequency with which advertiser $a$ wins bids. Basically, $\bar{w}(a, p)$ is the empirical distribution of the acceptance probability of ad network $a$ when offered price $p$. We assume that the zero price is always accepted. This does not fundamentally change our problem and allows us to avoid boundary cases in our simulations.

We experiment with $20$ most active ad slots in the \emph{iPinYou RTB} dataset, and refer to this subset of data as \emph{Active20}. Specifically, there are nine advertisers bidding on these ad slots. The prices in the dataset are in $[0, 300]$. We divide each price by $330$ in order to normalize all prices to $[0, 1]$. As in \cref{sec:synthetic}, all algorithms operate on $11$ discrete price levels. The only major difference from \cref{sec:synthetic} is that the valuations of ad networks are distributed according to \eqref{eq:exp_case_2}.

\begin{figure}[t]
\centering
  \includegraphics[width=1.546in,height=1.35in]{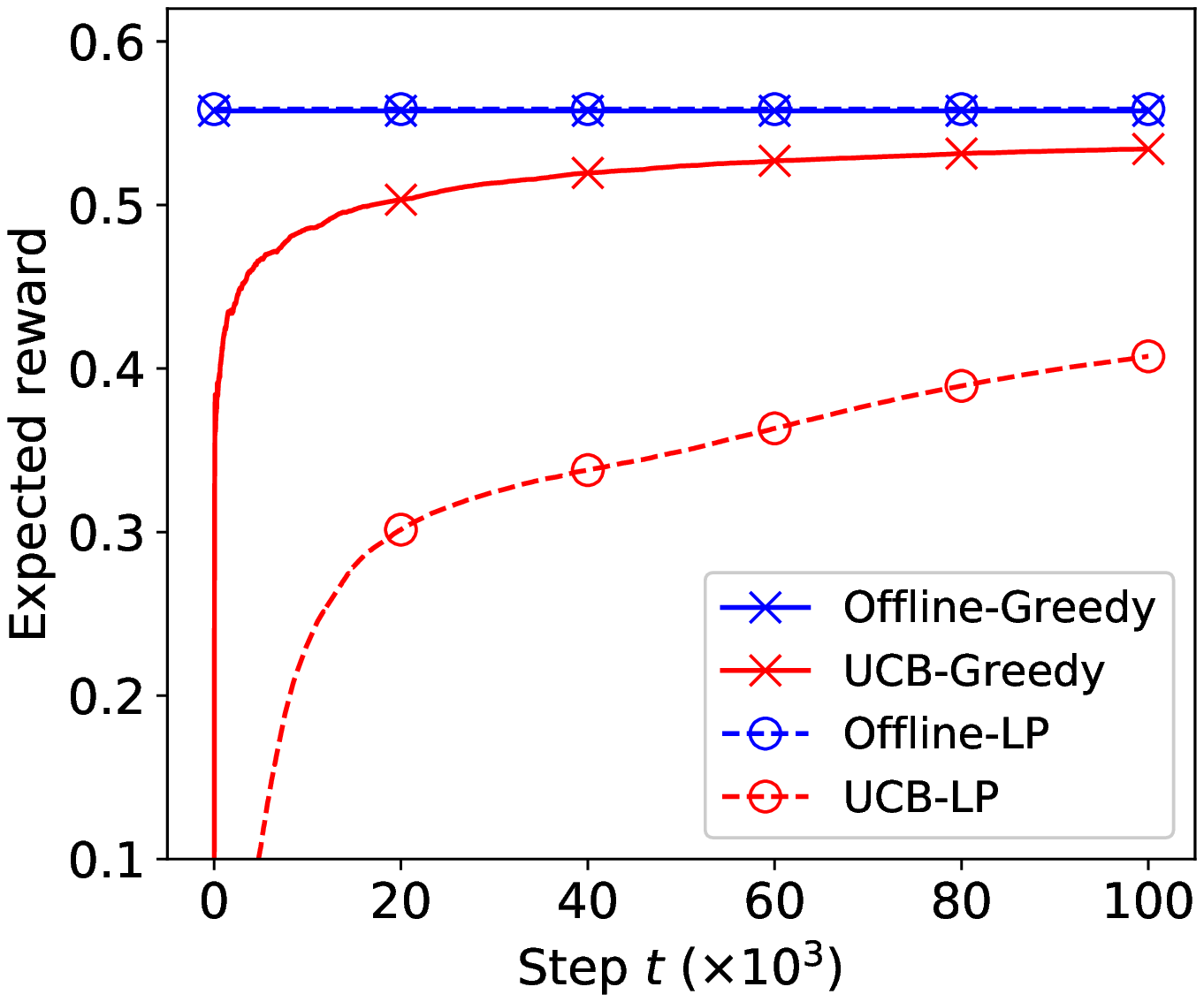}
  \includegraphics[width=1.546in,height=1.35in]{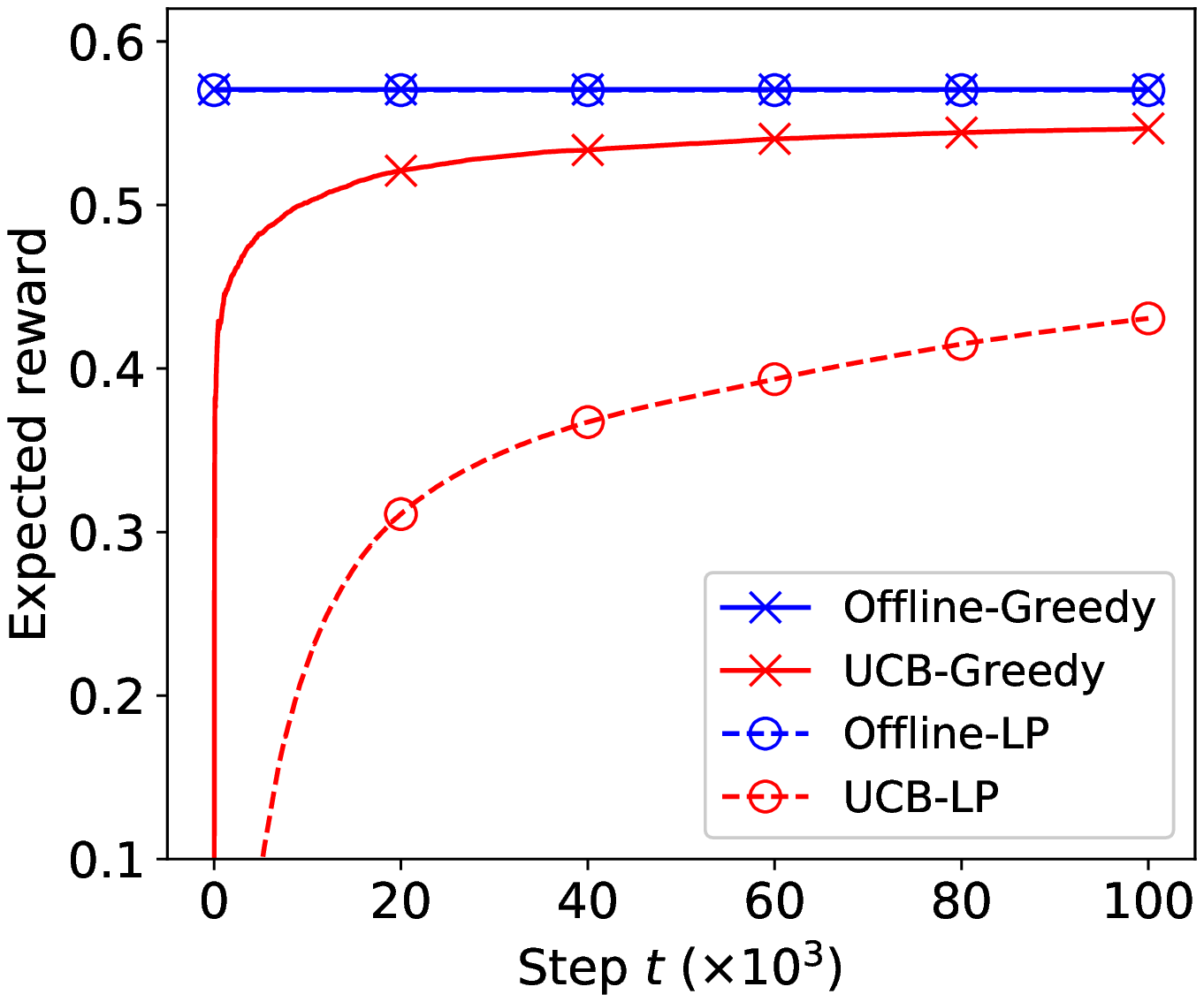} \\
  \hspace{0.18in} (a) \hspace{1.35in} (b) \vspace{-0.05in}
  \caption{Real-world problem with a single ad slot. (a) Expected reward up to step $t$ of \approach{Offline-X} and \approach{UCB-X} on the most active ad slot. (b) Average of the expected rewards of the same approaches over $10$ most active ad slots.}
  \label{fig:exp_case_2}
\end{figure}

Our results on the most active ad slot in \emph{Active20} are reported in \cref{fig:exp_case_2}a. We observe two major trends. 
First, \approach{Offline-X} has the same performance irrespective of the oracle. The expected rewards of both \approach{Offline-Greedy} and \approach{Offline-LP} are $0.56$ dollars in $100$k steps, or equivalently $56$k dollars in revenue. 
Second, \approach{UCB-Greedy} learns faster than \approach{UCB-LP}. In particular, the expected reward of \approach{UCB-Greedy} is $0.53$ dollars in $100$k steps, or equivalently $53$k dollars in revenue. The expected reward of \approach{UCB-LP} is $0.41$ dollars in $100$k steps, or equivalently $41$k dollars in revenue. The difference in the revenues of two approaches in $100$k steps is $12$k dollars.

We also report the average performance of our algorithms on $10$ most active ad slots in \emph{Active20} in Figure \ref{fig:exp_case_2}b. These trends are extremely similar to those in \cref{fig:exp_case_2}a. This experiment validates that our findings from \cref{fig:exp_case_2}a are not limited to the most active ad slot, and that they apply to different ad slots.

\subsubsection{Selling Multiple Ad Slots}
\label{sec:multiple ad slots}
Publishers often have different pages and sell hundreds of ad slots. To facilitate operations and speed up learning, one option is to learn a single selling strategy across multiple ad slots. In this experiment, we evaluate this option. In particular, if the acceptance probabilities of ad networks do not change much with the ad slots, learning of one common model is expected to lead to much faster learning of a near-optimal policy.

The acceptance probabilities of ad networks are estimated in the same way as \cref{sec:single ad slot}. We consider the following model of interaction with multiple ad slots. Let $m$ be the number of ad slots. The ad slot at time $t$ is drawn uniformly at random from these $m$ ad slots. The publisher knows the identity of the ad slot at time $t$ and its goal is to maximize its reward, in expectation over the randomness in the choice of the ad slot at time $t$ and the behavior of ad networks. 
We propose two solutions to this problem. One is \approach{UCB-X} that treats each ad slot separately and computes UCBs for all pairs of ad networks and prices in each ad slot. The other is \approach{Joint-UCB-X} that treats all ad slots as a single slot, and computes UCBs for all pairs of ad networks and prices. \approach{Joint-UCB-X} is expected to perform well if the acceptance probabilities of ad networks do not vary much across ad slots.

\begin{figure}[t]
  \centering
  \includegraphics[width=1.546in,height=1.35in]{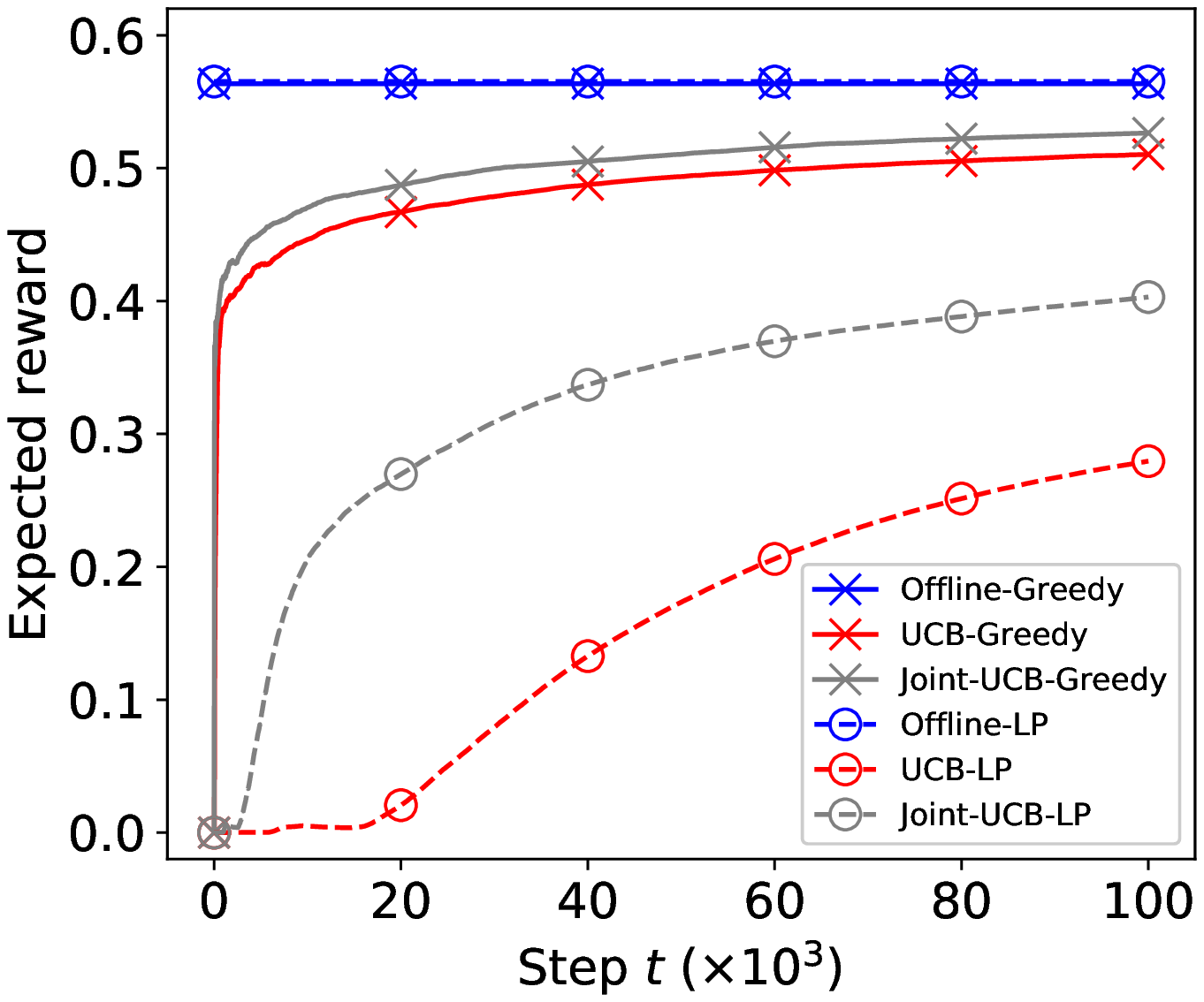}
  \includegraphics[width=1.546in,height=1.35in]{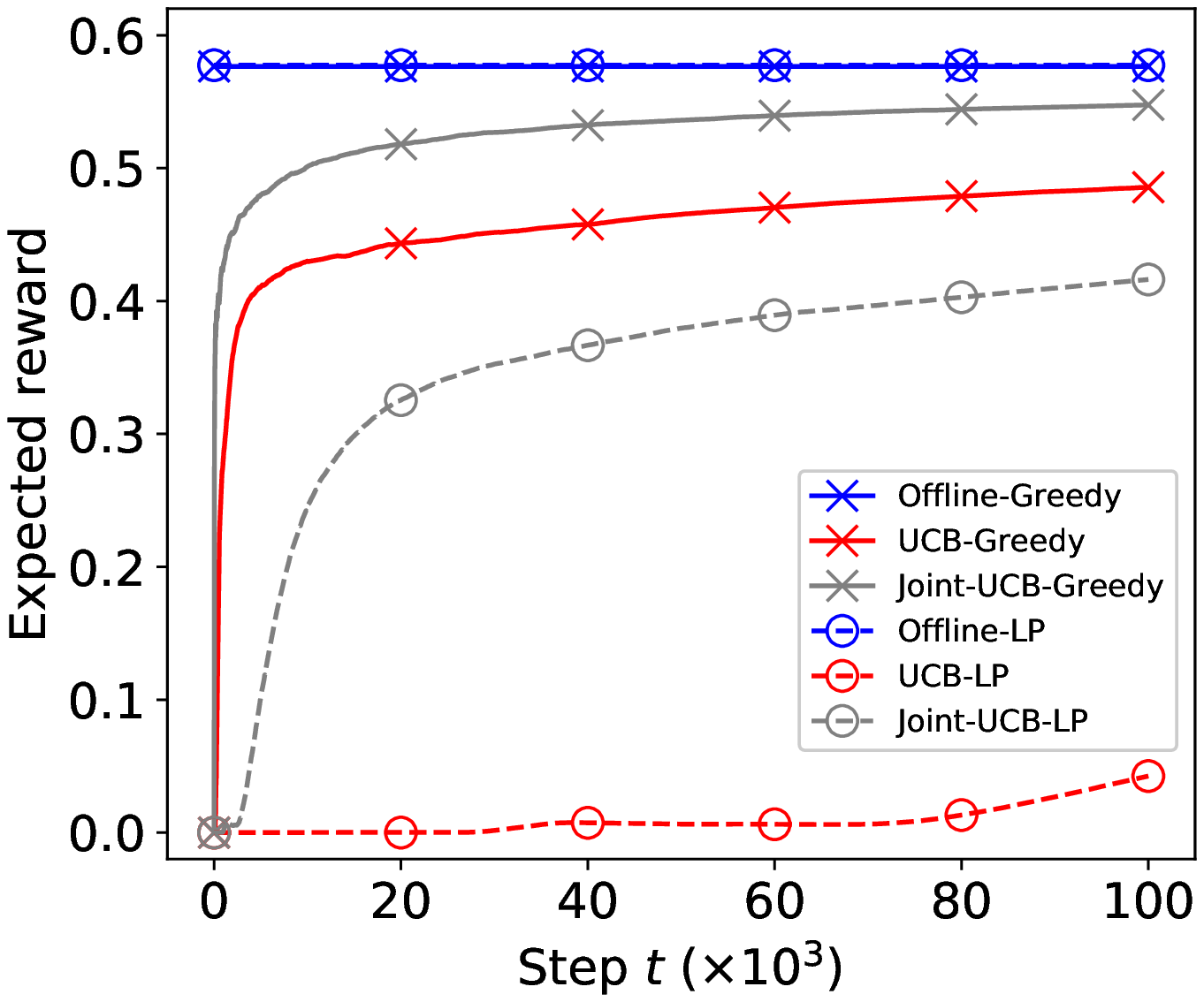} \\
  \hspace{0.18in} (a) \hspace{1.35in} (b) \vspace{-0.05in}
  \caption{Real-world problem with multiple ad slots. (a) Expected reward up to step $t$ of \approach{Offline-X}, \approach{UCB-X}, and \approach{Joint-UCB-X} on five most active ad slots. (b) Expected reward of the same approaches on $20$ most active ad slots.}
  \label{fig:exp_case_3}
\end{figure}

Our results on $m = 5$ most active ad slots in \emph{Active20} are reported in \cref{fig:exp_case_3}a. We observe two major trends. 
First, the expected rewards of both \approach{UCB-X} and \approach{Joint-UCB-X} improve over time. The expected reward of \approach{UCB-Greedy} is $0.5$ dollars in $100$k steps, or equivalently $50$k dollars in revenue. The expected reward of \approach{Joint-UCB-Greedy} is $0.53$ dollars in $100$k steps, or equivalently $53$k dollars in revenue. 
Second, \approach{Joint-UCB-X} learns faster than \approach{UCB-X}. In particular, the difference in the expected rewards of \approach{Joint-UCB-Greedy} and \approach{UCB-Greedy} is $0.03$ dollars in $100$k steps, or equivalently $3$k dollars in revenue. This highlights a common trade-off in learning. Although \approach{Joint-UCB-Greedy} learns only an approximate model, this model is easier to learn in a finite time because it has $m$ times less parameters than \approach{UCB-Greedy}. We observe the same trends with \approach{LP} oracle.

Our results on all the ad slots in \emph{Active20} are reported in \cref{fig:exp_case_3}b. These trends are similar to those in \cref{fig:exp_case_3}a. We note that the benefits of \approach{Joint-UCB-Greedy} and \approach{Joint-UCB-LP} increase with $m$.

\subsubsection{Selling to Aggregated Ad Networks}
\label{sec:ad networks}
A common scenario is that publishers interact with third parties, which aggregate multiple ad networks. 
In this section, we study the impact of ad network aggregation on learning publisher revenue.

The third parties are modeled as follows. All ad networks are partitioned into $h$ groups, $G_1,\ldots,G_{h}$. The values for $h$ will be specified later. When price $p$ is offered to group $G_i$, any ad network $a \in G_i$ accepts the offered price with probability $\bar{w}(a, p)$ in \eqref{eq:exp_case_2}, independently of all other ad networks. If at least one $a \in G_i$ accepts, $G_i$ accepts. From the point of view of the publisher and our algorithms, each group is treated as an ad network.

\vspace{1.5mm}
\noindent {\bf Learning with Aggregated Ad Networks. \hspace{1mm}} 
We first show that our algorithm can learn to sell to aggregated ad networks. We also show that \approach{LP} oracle leads to faster learning than \approach{Greedy} oracle when the dynamics of selling is more complicated.

We set $h=2$ and evaluate \approach{Offline-X} and \approach{UCB-X} on the most active ad slot in \emph{Active20} dataset under two settings. In the first experiment, we fix six ad networks in $G_1$ and put the remaining three ad networks in $G_2$. In the second experiment, we put six random ad networks in $G_1$ and the remaining three ad networks in $G_2$. This experiment is repeated with $10$ random partitions.

\begin{figure}[t]
  \centering
  \includegraphics[width=1.546in,height=1.35in]{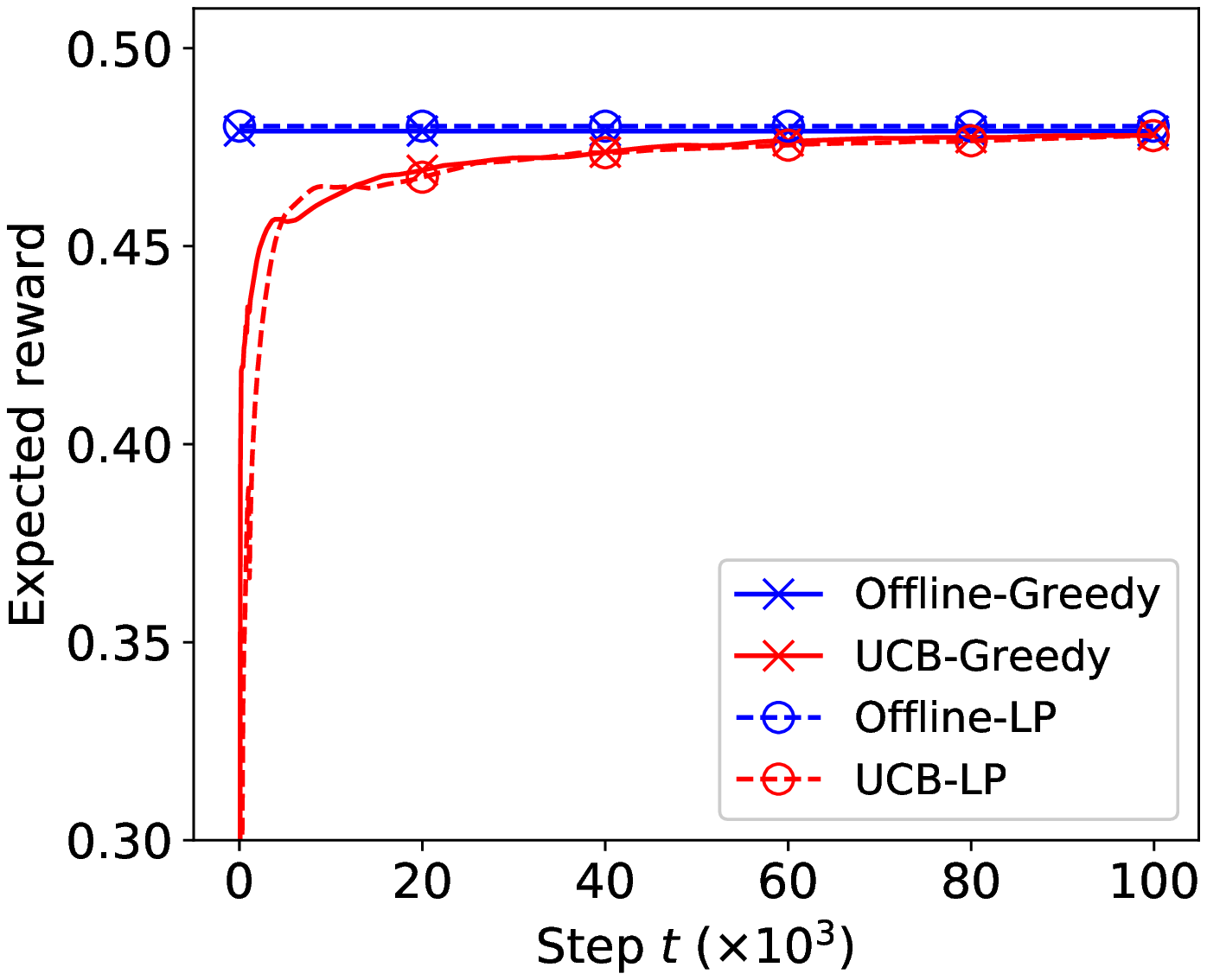}
  \includegraphics[width=1.546in,height=1.35in]{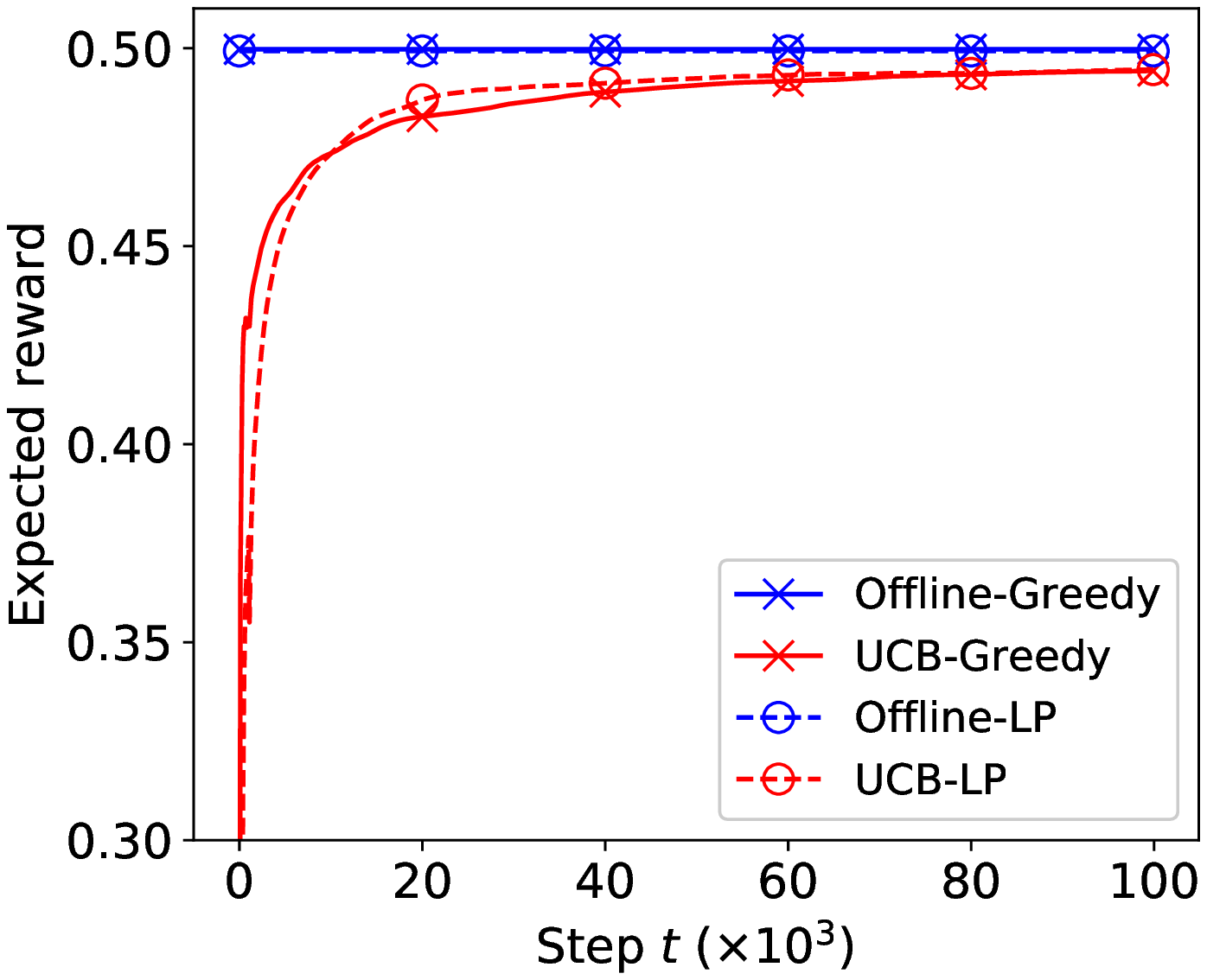} \\
  \hspace{0.18in} (a) \hspace{1.35in} (b) \vspace{-0.05in}
  \caption{Real-world problem with aggregated ad networks. (a) Expected reward up to step $t$ of \approach{Offline-X} and \approach{UCB-X} over the fixed partition of the most active ad slot. (b) Average of the expected rewards of the same approaches over $10$ random partitions.}
  \label{fig:exp_case_4}
\end{figure}

The results of the first experiment are shown in \cref{fig:exp_case_4}a. We observe one major trend. The expected reward of \approach{UCB-X} converges to that of the best approximation in hindsight irrespective of the oracle. 
For example, the expected reward of \approach{Offline-X} is around $0.48$ dollars in $100$k steps, or equivalently $48$k dollars in revenue. The expected reward of \approach{UCB-X} reaches almost $0.478$ dollars in $100$k steps, or equivalently $47.8$k dollars in revenue. The difference in revenues is merely $200$ dollars, which indicates that \approach{UCB-X} can learn a very good approximation in this experiment.

The results of the second experiment are shown in \cref{fig:exp_case_4}b. We make two additional observations. First, the trends are very similar to those in \cref{fig:exp_case_4}a. This shows that our algorithm \approach{UCB-X} does not overfit to a specific group of ad networks. Second, algorithms with \approach{LP} oracle learn slightly faster than those with \approach{Greedy} oracle. For example, the expected reward of \approach{UCB-Greedy} and \approach{UCB-LP} are respectively $0.491$ and $0.493$ dollars in $60$k steps. The difference of the expected rewards is $0.002$ dollars in $60$k steps, or equivalently $120$ dollars in revenue.

\vspace{1.5mm}
\noindent {\bf Publisher revenue with Aggregated Ad Networks. \hspace{1mm}} 
Finally, we study the impact of ad network aggregation on the expected revenue of publisher.

Again, we evaluate \approach{Offline-X} and \approach{UCB-X} on the most active ad slot in \emph{Active20} dataset but under three different configurations:
\begin{enumerate}
 \item Configuration 1: $h=2$ with group sizes of six and three.
 \item Configuration 2: $h=3$ with group sizes of four, four and one.
 \item Configuration 3: $h=9$ where all group sizes are one.
\end{enumerate}
These configurations represent different degrees of ad network aggregation. In all the configurations, the ad networks are partitioned in a uniformly random fashion. Each configuration is repeated for $10$ times.

\begin{figure}[t]
  \centering
  \includegraphics[width=1.546in,height=1.35in]{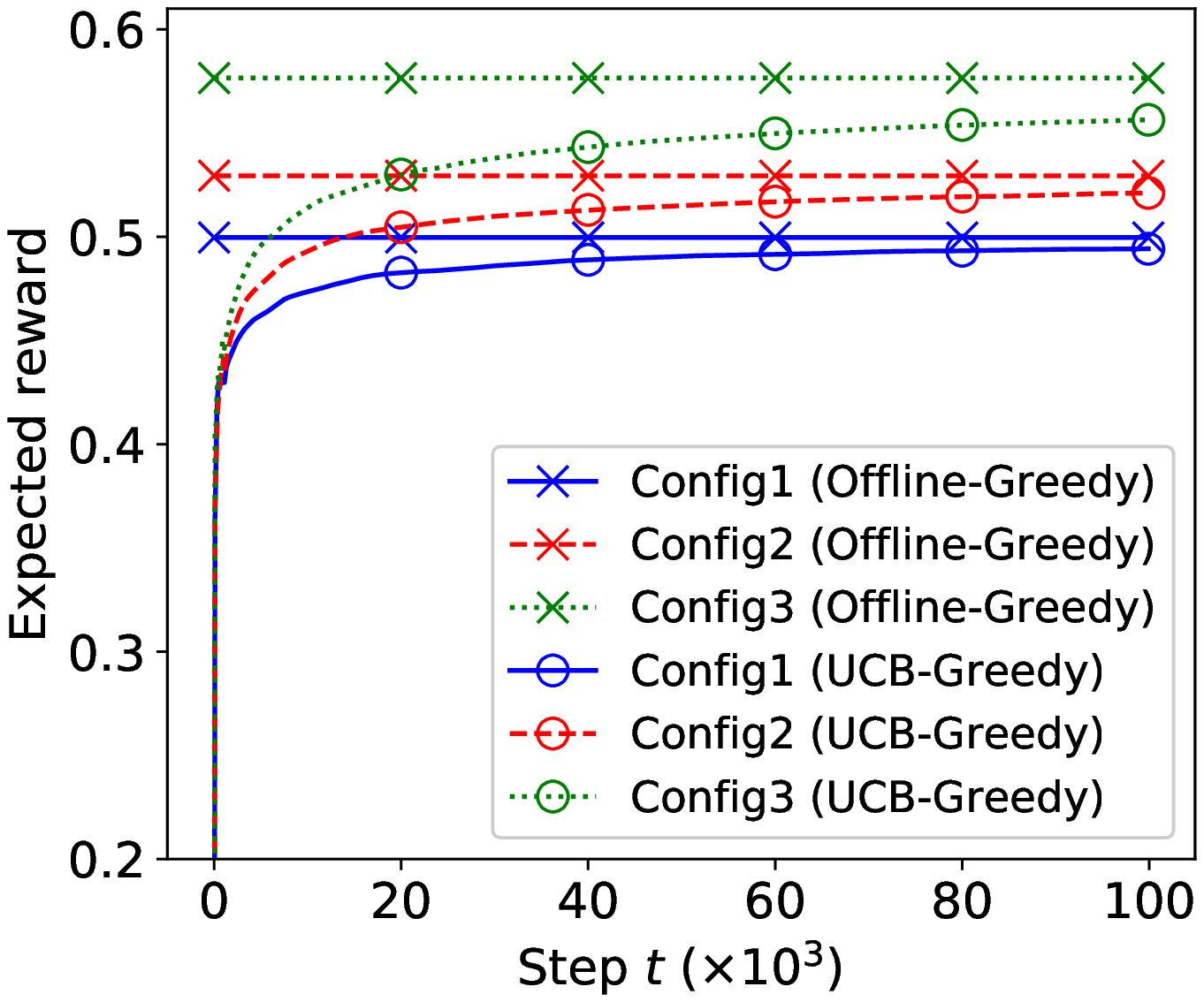}
  \includegraphics[width=1.546in,height=1.35in]{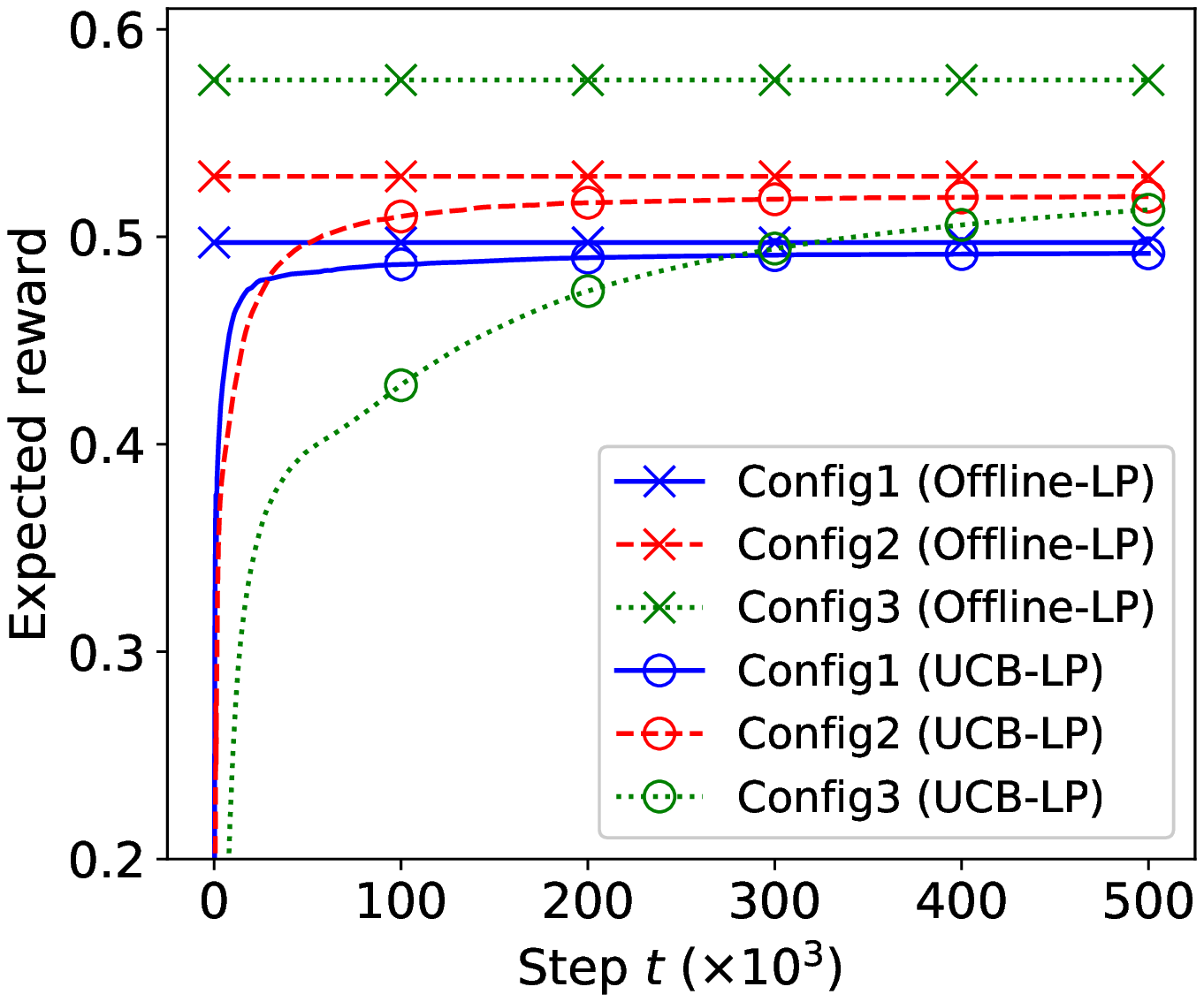} \\
  \hspace{0.18in} (a) \hspace{1.35in} (b) \vspace{-0.05in}
  \caption{Impact of three configurations of aggregated ad networks on expected revenues. (a) Publisher expected reward up to step $t$ for oracle \approach{Greedy}. (b) Publisher expected reward up to step $t$ for oracle \approach{LP}.}
  \label{fig:exp_case_5}
\end{figure}

The results of oracles \approach{Greedy} and \approach{LP} are respectively reported in \cref{fig:exp_case_5}a and \cref{fig:exp_case_5}b. 
We observe the similar results to the previous experiment that our algorithms can learn to sell under all the configurations of ad network aggregations.
Moreover, we observe two additional interesting trends.

First, less aggregation of ad networks results in higher expected reward. 
Take the oracle \approach{Greedy} as example. 
As shown in \cref{fig:exp_case_5}a, the expected rewards of \approach{Offline-Greedy} and \approach{UCB-Greedy} under Configuration 3 are respectively $0.576$ and $0.556$ dollars in $100$k steps. 
They are both higher than the expected rewards acquired from other configurations where ad networks aggregate into groups.
One explanation is that less aggregation of ad networks allows the publisher to better customize prices to ad networks, and hence the expected reward is higher.

Second, less aggregation of ad networks requires longer time to find the optimal solution, especially for the algorithm \approach{UCB-LP}.
To illustrate this phenomenon, we run all algorithms with oracle \approach{LP} for more steps ($t=500$k). 
As shown in \cref{fig:exp_case_5}b, the expected reward of \approach{UCB-LP}  reaches $0.513$ dollars in $500$k steps when there are nine individual ad networks (Configuration 3).
It exceeds the expected reward of $0.492$ dollars in the case of two aggreated groups (Configuration 1) and is close to $0.519$ dollars of three groups (Configuration 2).
With less aggregation, although our algorithm statistically should be able to collect more responses per waterfall run, 
it needs to learn the behavior of more groups. 


\section{Related Work}
\label{sec:related work}

Our work is at the intersection of online advertising and online learning with partial feedback.

The problem of waterfall optimization was studied before under the name of ``sequential posted price mechanisms'' \cite{Chawla2010,chak2010,Adamczyk2017,Babaioff2015,Kleinberg2003}. In \cite{Chawla2010,chak2010,Adamczyk2017}, the acceptance probabilities of ad networks are assumed to be known by the publisher. \cite{Babaioff2015,Kleinberg2003} study the waterfall optimization problem in an online setting, under the assumption that all ad networks have the same acceptance probabilities. We do not make any of these assumptions.

Our work is a generalization of online learning to rank in the cascade model \cite{kveton15cascading,kveton15combinatorial}. More specifically, cascading bandits can be viewed as waterfall bandits when $Q = \{1\}$. This seemingly minor change has major implications. For instance, when $Q = \{1\}$, the optimal solution in \eqref{eq:waterfall_optimization} can be computed greedily. In our case, no polynomial-time algorithm is known for solving \eqref{eq:waterfall_optimization}. From the learning point of view, we learn $K M$ statistics. In cascading bandits, only $K$ statistics are learned because $M = |Q| = 1$. 

Our problem is a form of partial monitoring \cite{bartok12adaptive,agrawal89asymptotically}, which is a harder class of learning problems than multi-armed bandits. The general algorithms in partial monitoring cannot solve our problem computationally efficiently because their computational cost is $\Omega(|\mathcal{A}|)$, where $|\mathcal{A}|$ is exponential in the number of ad networks.

Our setting is also reminiscent of stochastic combinatorial semi-bandits \cite{gai12combinatorial,chen14combinatorial,kveton15tight,wen15efficient}, which can be solved statistically efficiently by UCB-like algorithms. The difference is that our feedback is less than semi-bandit. In particular, if an ad network accepts an offer, the learning agent does not learn if any of the subsequent ad networks would have accepted their offered prices. In combinatorial semi-bandits, all of these events are assumed to be observed. Therefore, our problem cannot be solved as a combinatorial semi-bandit.

\section{Conclusions}
\label{sec:conclusions}

For the waterfall, we propose the algorithm $\alg$, a computationally and sample efficient online algorithm for learning to price, which maximizes the expected revenue of the publisher. We derive a sublinear upper bound on the $n$-step regret of $\alg$. Note that $\alg$ solves a general problem of learning to maximize \eqref{eq:waterfall_optimization} from partial feedback. Therefore, although our main focus is online advertising, the algorithm may have other applications, especially in learning to price.

We evaluate $\alg$ on both synthetic and real-world data, and show that it quickly learns competitive strategies to the best approximations in hindsight. In addition, we investigate multiple real-world scenarios that are of a particular interest of publishers. We show that $\alg$ can learn to sell in these scenarios and it does not overfit. 

We leave open several questions of interest. For instance, note that the update of statistics in $\alg$ can be easily modified to leverage the following two monotonicity properties. When ad network $a$ accepts price $p$, it would have accepted any lower price $p' < p$. Similarly, when ad network $a$ does not accept price $p$, it would have not accepted any higher price $p' > p$. Roughly speaking, this would make $\alg$ more statistically efficient. However, it is non-trivial to prove that this would result in a better regret bound than that in \cref{sec:analysis}. 
We leave these for future work.

\bibliographystyle{plain}
\bibliography{Papers,References}

\clearpage
\onecolumn
\appendix

\section{Appendix: Proof of Theorem \ref{thm::thm1}}\label{subsub:thm_proof}

We first prove that the function $f(A, u)$ is monotone in the weight function $u$, for any fixed action $A$.

 \begin{restatable}{lemma}{lemma_mon}
  Consider $\bar{w},\bar{v} \in [0,1]^{[K] \times Q}$ such that for all $i \in [K], j \in Q$, $\bar{w}(i,j) \leq \bar{v}(i,j) $.
  If the items of action $A$ are sorted in descending order of prices, then 
  \begin{equation}
  \label{eqn:monotonicity}
    f(A,\bar{w}) \leq f(A,\bar{v})
  \end{equation}
  \label{thm::prop1}
 \end{restatable}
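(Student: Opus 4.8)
I would prove \eqref{eqn:monotonicity} by rewriting $f(A,u)$ through its ``tail revenue'' recursion and then running a backward induction on the position index. Fix $A = ((a_i)_{i\in[K]},(p_i)_{i\in[K]})$ with $p_1 \geq p_2 \geq \cdots \geq p_K$, and for a weight function $u$ abbreviate $u_i := u(a_i,p_i)$. For $i \in [K]$ define the tail revenue
\[
  g_i(u) \;=\; \sum_{k=i}^K \Big[\prod_{j=i}^{k-1}\big(1-u_j\big)\Big]\, u_k\, p_k \,,
\]
with the convention $g_{K+1}(u) = 0$. Then $f(A,u) = g_1(u)$ and $g_i(u) = u_i p_i + (1-u_i)\,g_{i+1}(u)$ for all $i\in[K]$.

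\textbf{Key sub-lemma (this is where descending order is used).} For every $i\in[K]$ one has $g_{i+1}(u) \leq p_i$. Indeed, the coefficients $\prod_{j=i+1}^{k-1}(1-u_j)\,u_k$ for $k=i+1,\dots,K$ are nonnegative and telescope to $1 - \prod_{j=i+1}^{K}(1-u_j) \leq 1$, so $g_{i+1}(u)$ is a sub-convex combination of $p_{i+1},\dots,p_K$, whence $g_{i+1}(u) \leq p_{i+1} \leq p_i$ by the ordering assumption. Without the ordering this bound (and in fact monotonicity itself) can fail, so I expect this step to be the crux of the argument.

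\textbf{Backward induction.} I claim $g_i(\bar w) \leq g_i(\bar v)$ for all $i$, which for $i=1$ gives the lemma. The base case $g_{K+1}\equiv 0$ is trivial. For the inductive step, using $g_i(u) = u_i p_i + (1-u_i) g_{i+1}(u)$ and regrouping,
\[
  g_i(\bar v) - g_i(\bar w)
  \;=\; (1-\bar v_i)\big(g_{i+1}(\bar v) - g_{i+1}(\bar w)\big)
     \;+\; (\bar v_i - \bar w_i)\big(p_i - g_{i+1}(\bar w)\big)\,.
\]
The first term is nonnegative because $1-\bar v_i \geq 0$ and $g_{i+1}(\bar v) \geq g_{i+1}(\bar w)$ by the induction hypothesis; the second is nonnegative because $\bar v_i \geq \bar w_i$ and $p_i - g_{i+1}(\bar w) \geq 0$ by the sub-lemma. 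Hence $g_i(\bar v) \geq g_i(\bar w)$, completing the induction and the proof.

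\textbf{Alternative route.} One can instead use Abel summation: with $S_i := \prod_{j=1}^i (1-u_j)$ and $S_0 := 1$, since $\prod_{j=1}^{i-1}(1-u_j)\,u_i = S_{i-1} - S_i$, we get $f(A,u) = p_1 - S_K p_K + \sum_{i=1}^{K-1} S_i\,(p_{i+1} - p_i)$. Each $S_i$ is coordinatewise nonincreasing in $u$, while $p_{i+1}-p_i \leq 0$ and $p_K \geq 0$ (using the ordering), so replacing $\bar w$ by the larger $\bar v$ can only increase every summand. This gives the same conclusion in one line but hides the role of the $g_{i+1}\leq p_i$ estimate; I would present the inductive version as the main proof since it is the cleanest to verify term by term.
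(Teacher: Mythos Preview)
Your main inductive argument is essentially the paper's proof: the paper also uses the recursion $f(A,u)=u_1 p_1+(1-u_1)f(\tilde A,u)$, invokes the induction hypothesis on the tail, and uses exactly your ``sub-lemma'' $f(\tilde A,\cdot)\le p_2\le p_1$ (from the descending-price assumption) to handle the change $\bar w_1\to\bar v_1$. Your backward induction on the position index $i$ is just a relabeling of the paper's forward induction on $K$, and your single-line decomposition $g_i(\bar v)-g_i(\bar w)=(1-\bar v_i)\big(g_{i+1}(\bar v)-g_{i+1}(\bar w)\big)+(\bar v_i-\bar w_i)\big(p_i-g_{i+1}(\bar w)\big)$ is a tidy repackaging of the paper's two-step argument.

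Your Abel-summation alternative, however, is a genuinely different route that the paper does not take. Writing $f(A,u)=p_1-S_K p_K+\sum_{i=1}^{K-1}S_i(p_{i+1}-p_i)$ with $S_i=\prod_{j\le i}(1-u_j)$ turns the monotonicity into an immediate coordinatewise statement: each $S_i$ is antitone in $u$, each coefficient $p_{i+1}-p_i$ is nonpositive by the ordering, and $-S_K p_K$ is monotone because $p_K\ge 0$. This avoids induction entirely and makes the role of the descending-order hypothesis completely transparent (it is exactly what forces the Abel coefficients to have the right sign). The inductive proof, by contrast, localizes the argument and is perhaps easier to audit term by term; either would be acceptable here.
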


\begin{proof}
We prove Lemma~\ref{thm::prop1} based on the mathematical induction on $K$, the number of ad networks. 
 
 \noindent \textbf{Induction base:} We first prove that equation~\ref{eqn:monotonicity} holds for the case with $K=1$. Notice that for $K=1$,
 \begin{equation}
  f(A,\bar{w}) = p_1 \bar{w}(a_1,p_1) \leq p_1 \bar{v}(a_1,p_1) = f(A,\bar{v})
 \end{equation}

\noindent \textbf{Induction step:} For any integer $m \geq 1$, we then prove that if equation~\ref{eqn:monotonicity} holds for $K=m$, then it also holds for
$K=m+1$. Recall that for $K=m+1$, $A= \left( (a_{1}, p_{1}), \ldots, (a_{m+1}, p_{m+1}) \right)$. 
To simplify the exposition, we also define the term $\tilde{A}= \left( (a_{2}, p_{2}), \ldots, (a_{m+1}, p_{m+1}) \right)$.  
Notice that

\begin{align}
            f(A,\bar{w})  = & \bar{w}(a_1,p_1) p_1 + (1 - \bar{w}(a_1,p_1)) f(\tilde{A},\bar{w}) \nonumber \\
 \stackrel{(a)}{\leq} & \bar{w}(a_1,p_1) p_1 + (1 - \bar{w}(a_1,p_1)) f(\tilde{A},\bar{v})  \nonumber
\end{align}
where $(a)$ follows the induction hypothesis.
Moreover, since  $f(\tilde{A},\bar{v})$ is the expected revenue of $\tilde{A}$, $f(\tilde{A},\bar{v}) \leq p_2 \leq p_1$. Therefore,
\begin{align}
  & 0 \leq (p_1   - f(\tilde{A},\bar{v})) (\bar{v}(a_1,p_1) - \bar{w}(a_1,p_1))  
\end{align}
which implies
\begin{align*}
   \, \bar{w}(a_1,p_1) p_1 + (1 - \bar{w}(a_1,p_1)) f(\tilde{A},\bar{v}) 
 \leq  \, \bar{v}(a_1,p_1) p_1 + (1 - \bar{v}(a_1,p_1)) f(\tilde{A},\bar{v})  
 =   \, f(A,\bar{v}) 
\end{align*}

\noindent As a result, $f(A,\bar{w}) \leq f(A,\bar{v})$. 
\end{proof}

We now prove Theorem~\ref{thm::thm1}.
First, we define the ``bad event" $\mathcal{E}_t$ at time $t$ as the event that at least one $\bar{w}(e)$ is outside its confidence interval at time $t$, 
\begin{align}
  \mathcal{E}_t = \{ \exists e \in E \; s.t. \; |\bar{w}(e) - \hat{w}_{T_{t-1}(e)}(e)| > c_{t-1, T_{t-1}(e)}  \} \, .
\end{align}

Notice that $\bar{\mathcal{E}}_t$, the complement of $\mathcal{E}_t$, is considered as the ``good event" at time $t$.
Similar to \cite{kveton15cascading}, 
we define the event $G_{e,t}$ as the event that item $e=(a,p)$ is ``observed" at time $t$ (i.e. ad network $a$ is called and offered price $p$ at time $t$):
\begin{align}
   G_{e,t} =& \{ \exists 1 \leq k \leq K \ni \; d_{t,k} = e ,  B_t \geq k \}
\end{align}
where $d_{t,i} = (a_{t,i},p_{t,i})$.  
In addition, we define $\mathbb{E}_t[\cdot] = \mathbb{E}[\cdot| \mathbb{H}_t]$ where $\mathbb{H}_t$ is the history of all
actions and feedbacks until time $t-1$ plus the action $A_t$, which is determined by $\mathbb{H}_t$ under Algorithm~\ref{alg:alg1}.
The following lemma bounds the per-step scaled regret $R_t^\gamma = \gamma f(A^*, \bar{w}) - f(A_t, \bar{w})$ under the ``good event" $\bar{\mathcal{E}}_t$:

\lemmadecomp

\begin{proof} 
Conditioning on the event $\bar{\mathcal{E}}_t$, we have $\bar{w} \leq U_t$.
Lemma~\ref{thm::prop1} states that $\bar{w} \leq U_t$ implies
$f(A, \bar{w}) \leq f(A, U_t)$. Then we have the following bound on $R_t^\gamma$:
\begin{align}
R_t^\gamma \stackrel{(a)}{=} & \gamma f(A^*, \bar{w}) - f(A_t, \bar{w}) \nonumber \\
\stackrel{(b)}{\leq} & \gamma f(A^*, U_t ) - f(A_t, \bar{w}) \nonumber \\
\leq  & \gamma \max_A f(A, U_t ) - f(A_t, \bar{w}) \nonumber \\
\stackrel{(c)}{\leq} & f(A_t, U_t) - f(A_t, \bar{w}),
\end{align}
where (a) follows from the definition of $R_t^\gamma$, 
(b) follows from Lemma~\ref{thm::prop1}, and (c) follows from the fact that 
$A_t$ is computed from a $\gamma$-approximation algorithm.

To simplify the exposition, in the rest of this proof, we use $\bar{w}_i$ and $U_i$ to respectively denote $\bar{w}(a_{t,i}, p_{t,i})$
and $U_t (a_{t,i}, p_{t,i})$, and use $p_i$ to denote $p_{t,i}$. Then we have
\begin{align}
R_t^\gamma \stackrel{(a)}{\leq} & \,
 \sum_{i=1}^K \left[ \prod_{j<i} [1 - U_j ]\right] U_i p_{i} -
\sum_{i=1}^K \left[ \prod_{j<i} [1 - \bar{w}_j ]\right] \bar{w}_i p_{i} \nonumber \\
 \stackrel{(b)}{\leq} & \,
  \sum_{i=1}^K \left[ \prod_{j<i} [1 - \bar{w}_j ]\right] U_i p_{i} -
\sum_{i=1}^K \left[ \prod_{j<i} [1 - \bar{w}_j ]\right] \bar{w}_i p_{i} \nonumber \\
\stackrel{(c)}{=} & \, \sum_{i=1}^K \left[ \prod_{j<i} [1 - \bar{w}_j ]\right] \left( U_i -\bar{w}_i \right) p_{i} \nonumber \\
\stackrel{(d)}{\leq} & \, \sum_{i=1}^K \left[ \prod_{j<i} [1 - \bar{w}_j ]\right] \left( U_i -\bar{w}_i \right) , \label{eqn2}
\end{align}
where (b) follows from $\bar{w}_i \leq U_i$ for all $i$ under event $\bar{\mathcal{E}}_t$, and (d) follows from $0< p_i \leq 1$ for all 
$i$. Notice that $U_i -\bar{w}_i $ is the ``item-wise" difference between the upper confidence and the mean, and $\prod_{j<i} [1 - \bar{w}_j ]$ is the conditional probability that the $i$th ad network will be called.
We have
\begin{align}
 \prod_{j<i} [1 - \bar{w}_j ] = \mathbb{E}_t \left\{ \mathbb{1}\{ G_{d_{t,i},t}\} \right\} \, .
\end{align}
From (\ref{eqn2}), we have
\begin{align}
  R^{\gamma}_t \leq \sum_{i=1}^K \mathbb{E}_t \left\{ \mathbb{1}\{ G_{d_{t,i},t}\} \right\} \phi_{(a_{t,i},p_{t,i}),t} \, .
  \label{eqn3}
 \end{align} 
\end{proof}

\noindent We use Lemma \ref{lemma::lemma1} to bound $R^{\gamma}(n)$ as follows. 
Notice that 
  $R^{\gamma}(n) = \mathbb{E}[\sum_{t=1}^n \mathbb{1}\{\mathcal{E}_t\} R^{\gamma}_t] + \mathbb{E}[\sum_{t=1}^n \mathbb{1}\{\bar{\mathcal{E}}_t\} R^{\gamma}_t]$.
We use $e$ to refer to an item in $E$. 
As discussed, all prices are less or equal to 1; so, $0 \leq f(A,w) \leq 1$. Hence, we have $R^{\gamma}_t \leq \gamma$. As a result, 
\begin{align*}
       & \mathbb{E}[\sum_{t=1}^n \mathbb{1}\{\mathcal{E}_t\} R^{\gamma}_t] \leq \gamma \mathbb{E}[\sum_{t=1}^n \mathbb{1}\{\mathcal{E}_t\}] \nonumber \\
  \leq &  \gamma \sum_{e \in E} \sum_{t=1}^n \sum_{s=1}^t \mathbb{P}(|\bar{w}(e) - \hat{w}_s(e)|\geq c_{t,s}) \nonumber \\
  \stackrel{(a)}{\leq} & 2\gamma \sum_{e \in E} \sum_{t=1}^n \sum_{s=1}^t \exp(-3\log t) \nonumber \\
  \leq & 2\gamma \sum_{e \in E} \sum_{t=1}^n t^{-2} \leq \gamma \frac{\pi^2}{3} |E| =\gamma \frac{\pi^2}{3} MK
\end{align*}
In the above derivation, $(a)$ follows Hoeffding's inequality.
Notice that for all $e \in E$ and $t \leq n$, we have
(1) $\phi_{e,t}  \leq 2c_{t-1,T_{t-1}(e)}$ under event $\bar{\mathcal{E}}_t$, (2) $c_{t-1,T_{t-1}(e)}  \leq c_{n,T_{t-1}(e)}$.
Based on Lemma \ref{lemma::lemma1}, we have
\begin{align*}
  & \mathbb{E} \left[\sum_{t=1}^n \mathbb{1}\{\bar{\mathcal{E}}_t\} R^{\gamma}_t \right] \leq \sum_{e \in E} \mathbb{E} \left[  \sum_{t=1}^n  \mathbb{1}\{\bar{\mathcal{E}}_t, G_{e,t}\} \phi_{e,t} \right]\\
  \leq & \; 2 \sum_{e \in E} \mathbb{E} \left[ \sum_{t=1}^n   \mathbb{1}\{\bar{\mathcal{E}}_t, G_{e,t}\} c_{t-1,T_{t-1}(e)} \right]
  \leq  \; 2\sqrt{1.5 \log n} \sum_{e \in E} \mathbb{E} \left[ \sum_{t=1}^n  \mathbb{1}\{\bar{\mathcal{E}}_t, G_{e,t}\} \sqrt{\frac{1}{T_{t-1}(e)}} \right]
\end{align*}
Notice that $T_{t-1}(e) \leq n$ and once $e$ is observed, $T_t(e)$ is increased by $1$, thus we have
\begin{align*}
  \mathbb{E} [ \sum_{t=1}^n   \mathbb{1}\{\bar{\mathcal{E}}_t, G_{e,t}\} \sqrt{\frac{1}{T_{t-1}(e)}} ] 
 \leq  \sum_{t=1}^n \frac{1}{\sqrt{t}} \leq 1+ \int_{1}^{n} \frac{1}{\sqrt{t}}dt \leq 2\sqrt{n}-1 < 2\sqrt{n}
\end{align*}
Recall that $|E|=MK$, we have
\begin{align*}
 \mathbb{E}[\sum_{t=1}^n \mathbb{1}\{\bar{\mathcal{E}}_t\} R^{\gamma}_t] \leq 4 MK \sqrt{1.5 n \log n}
\end{align*}
This concludes the proof for Theorem~\ref{thm::thm1}.

\end{document}